\definecolor{amber}{rgb}{1.0, 0.75, 0.0}
\theoremstyle{plain}
\newtheorem{theorem}{Theorem}[section]
\newtheorem*{theorem*}{Theorem}
\newtheorem{lemma}[theorem]{Lemma}
\newtheorem{example}[theorem]{Example}
\theoremstyle{definition}
\newtheorem{definition}[theorem]{Definition}
\theoremstyle{remark}
\newtheorem{remark}[theorem]{Remark}
\newcommand{\mutualinfo}[2]{\mathbf{I}({#1}; {#2})}
\newcommand{\KL}[2]{\mathbf{D}_\mathrm{KL} \left(#1\parallel #2 \right)}
\newcolumntype{C}[1]{>{\centering\let\newline\\\arraybackslash\hspace{0pt}}m{#1}}
\DeclareMathOperator{\Cor}{Cor}
\DeclareMathOperator{\Cov}{Cov}
\DeclareMathOperator{\sgn}{sign}
\newcommand{\ourparagraph}[1]{\textbf{#1}\; }
\newcommand{\ourmaintitle}{
    Beyond Normal: On the Evaluation of Mutual Information Estimators
}
\title{\ourmaintitle}
\begin{document}

\author{%
  \textbf{Pawe{\l} Czy{\.z}}\thanks{Equal contribution $^\dagger$Joint supervision}$^{\; \; 1,2}$ \quad \textbf{Frederic Grabowski}$^{*\, 3}$\\%
  \textbf{Julia E. Vogt}$^{4,5}$\quad \textbf{Niko Beerenwinkel}$^{\dagger\, 1,5}$ \quad \textbf{Alexander Marx}$^{\dagger\, 2,4}$\\%
 % \small
 %  \quad\\
  \small $^1$Department of Biosystems Science and Engineering, ETH Zurich\quad
  \small $^2$ETH AI Center, ETH Zurich\\
  \small $^3$Institute of Fundamental Technological Research, Polish Academy of Sciences\\
  \small $^4$Department of Computer Science, ETH Zurich\quad
  \small $^5$SIB Swiss Institute of Bioinformatics
}

\maketitle

\begin{abstract}
    Mutual information is a general statistical dependency measure which has found applications in representation learning, causality, domain generalization and computational biology.
    However, mutual information estimators are typically evaluated on simple families of probability distributions, namely multivariate normal distribution and selected distributions with one-dimensional random variables.
    In this paper, we show how to construct a diverse family of distributions with known ground-truth mutual information and propose a language-independent benchmarking platform for mutual information estimators.
    We discuss the general applicability and limitations of classical and neural estimators in settings involving high dimensions, sparse interactions, long-tailed distributions, and high mutual information.
    Finally, we provide guidelines for practitioners on how to select appropriate estimator adapted to the difficulty of problem considered and issues one needs to consider when applying an estimator to a new data set.
\end{abstract}

\addtocontents{toc}{\protect\setcounter{tocdepth}{-1}}

\section{Introduction}
\label{section:introduction}

Estimating the strength of a non-linear dependence between two continuous random variables (r.v.) lies at the core of machine learning.  Mutual information (MI) lends itself naturally to this task,  due to its desirable properties, such as invariance to homeomorphisms and the data processing inequality. It finds applications in domain generalization~\citep{li:22:invariant,ragonesi:21:mi-domain}, representation learning~\citep{belghazi:18:mine,oord:18:infonce}, causality~\citep{solo:08:causality-mi,kurutach:18:causalinfogan}, physics~\citep{Keys-MI-in-Physics}, systems biology~\citep{selimkhanov2014-mutual-information-biochemical-signaling, Grabowski-2019-systems-biology, Uda2020-MI-systems-biology}, and epidemiology~\citep{Young-MI-in-epidemiology}.

Over the last decades, the estimation of MI has been extensively studied, and estimators have been developed, ranging from classical approaches based on histogram density~\citep{pizer1987adaptive}, kernel density estimation~\citep{moon:95:kde} to $k$-nearest neighbor~\citep{kozachenko:87:firstnn,kraskov:04:ksg} and neural estimators~\citep{belghazi:18:mine, oord:18:infonce, song:20:understanding}.
However, despite the progress that has been achieved in this area, not much attention has been focused on systematically benchmarking these approaches. 

Typically, new estimators are evaluated assuming multivariate normal distributions for which MI is analytically tractable~\citep{darbellay:99:adaptive-partitioning,kraskov:04:ksg,suzuki:16:jic}.
Sometimes, simple transformations are applied to the data~\citep{khan:07:relative,gao:15:strongly-dependent}, moderately high-dimensional settings are considered~\citep{lord:18:gknn,lu:20:gknn-boosting}, or strong dependencies are evaluated~\citep{gao:15:strongly-dependent}.
Beyond that, \citet{song:20:understanding} study the self-consistency (additivity under independence and the data processing inequality) of neural MI estimators in the context of representation learning from image data.

\ourparagraph{Our Contributions}
In this work we show a method of developing expressive distributions with known ground-truth mutual information (Sec.~\ref{section:preliminaries}), propose forty benchmark tasks and systematically study the properties of commonly used estimators, including representatives based on kernel or histogram density estimation, $k$NN estimation, and neural network-based estimators (Sec.~\ref{section:benchmark}).
We address selected difficulties one can encounter when estimating mutual information (Sec.~\ref{sec:selected-challenges}), such as sparsity of interactions, long-tailed distributions, invariance, and high mutual information.
Finally, we provide recommendations for practitioners on how to choose a suitable estimator for particular problems  (Sec.~\ref{section:discussion-conclusion}).
Our benchmark is designed so that it is simple to add new tasks and estimators. It also allows for cross-language comparisons --- in our work we compare estimators implemented in Python, R and Julia.
All of our experiments are fully reproducible by running Snakemake workflows \citep{Moelder-2021-snakemake}.
Accompanying code is available at \url{http://github.com/cbg-ethz/bmi}.

\begin{figure*}[t]
    \centering
    \includegraphics[width=0.97\textwidth]{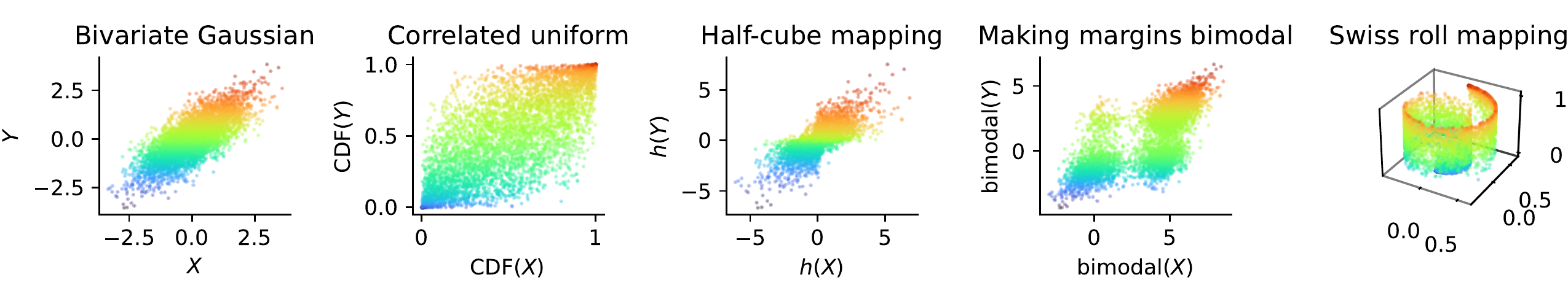}
    \vspace{-0.3cm}
    \caption{Visualisations of selected proposed distributions. Two correlated Gaussian variables $X$ and $Y$ (1) can be transformed via the Gaussian CDF into correlated uniform variables (2), into a long-tailed distribution via the ``half-cube'' mapping $t\mapsto t\sqrt{|t|}$ (3), into a multi-modal distribution (4) or embedded in the three-dimensional space via the composition of the Swiss roll mapping and Gaussian CDF (5). Color on the plot corresponds to the original $Y$ variable.
    }
    \label{fig:distribution-visualisation}
\end{figure*}

Overall, our \emph{key findings} from the benchmark can be summarized as follows:
\begin{itemize}[noitemsep,topsep=0pt,parsep=0pt,partopsep=0pt,leftmargin=*]
    \item Testing on multivariate normal distributions gives a biased and overly optimistic view of estimator performance. In this setting, canonical correlation analysis (CCA), a model-based approach, emerges as the most effective method --- even if the model is slightly misspecified. 
    \item Compared to classical estimators, neural estimators excel in high-dimensional settings, capturing sparse interactions across multiple dimensions.
    \item The popular KSG estimator \citep{kraskov:04:ksg} is accurate in low- and moderate-dimensional settings, but its performance suffers on problems involving high-dimensions or sparse interactions.
    \item Although MI is invariant to a wide range of transformations (Theorem~\ref{theorem:invariance-mi-injective-mappings}), numerical estimates, even with large sample sizes, are not. Thus, MI may not be suitable when metric invariance to specific transformation families is required.
    \item Multivariate Student distributions pose an important and hard challenge to many mutual information estimators.
    This effect is partially, but not fully, attributed to their long tails.
\end{itemize}

\section{Mutual Information: Estimation and Invariance}
\label{section:preliminaries}

We start by recalling the definition of MI, then discuss the problem of estimating it, and introduce the estimators used in the benchmark.

Consider two r.v.~$X$ and $Y$ with domains $\mathcal X$ respectively $\mathcal Y$, joint probability measure $P_{XY}$ and marginal probability measures $P_X$ and $P_Y$, respectively. 
If $P_{XY}$ is absolutely continuous with respect to $P_X \otimes P_Y$ (e.g., $X$ and $Y$ are finite r.v.), MI is equal to the following Kullback--Leibler divergence\footnote{We use the natural logarithm, meaning that the mutual information is measured in \emph{nats}.}:
\[
        \mutualinfo{X}{Y} = \KL{ P_{XY}  }{ P_X\otimes P_Y } = \int \log f \, \mathrm{d} P_{XY},
\]
    where $f=\mathrm{d}P_{XY} / \mathrm{d}(P_X\otimes P_Y)$ is the Radon--Nikodym derivative.
If the absolute continuity does not hold, then $\mutualinfo{X}{Y} = +\infty$~\citep[Theorem 2.1.2]{pinsker1964information}.
If $\mathcal X$ and $\mathcal Y$ are Euclidean spaces and measures $P_{XY}$, $P_X$ and $P_Y$ have probability density functions (PDFs) with respect to the Lebesgue measure, then the Kullback--Leibler divergence can be written in terms of the PDFs.
However, we will later consider distributions which do not have PDFs with respect to the Lebesgue measure and the general Radon--Nikodym derivative must be used (see ``Swiss roll'' in~Fig.~\ref{fig:distribution-visualisation}).

In almost all applications, $P_{XY}$ and $P_X\otimes P_Y$ are not known and one needs to estimate MI from a finite sample from $P_{XY}$, i.e., a realization
\(
    \big((x_1, y_1), \dotsc, (x_N, y_N)\big) \in (\mathcal X\times \mathcal Y)^N
\)
of $N$ i.i.d.\ r.v.\ $(X_1, Y_1), \dotsc, (X_N, Y_N)$ distributed according to the joint distribution $P_{XY}$.
As a MI estimator we will denote a family of measurable mappings, indexed by the sample size $N$ and spaces $\mathcal X$ and $\mathcal Y$ (in most applications, Euclidean spaces of various dimensionalities):
\(
    e^{\mathcal X\mathcal Y}_N\colon (\mathcal X\times \mathcal Y)^N \to \mathbb R.
\)
Composing this mapping with the r.v.~representing the whole data set
we obtain a real-valued r.v.\
\(
    E^{XY}_{N}.
\)
For a given $P_{XY}$ we can consider the distribution of $E^{XY}_N$. It is often summarized by its first two moments, resulting in the bias and the variance of the estimator.
% \(
%     \mathbb E\left[E^{XY}_N\right] - \mutualinfo{X}{Y},\quad
%     \mathbb E\left[\left(E^{XY}_N - \mathbb E\left[E^{XY}_N\right]\right)^2 \right].
% \)
Understanding the bias of an estimator however requires the knowledge of ground-truth MI, so the estimators are typically tested on the families of jointly multivariate normal or uniform distributions, with varying $N$ and ground-truth $\mutualinfo{X}{Y}$~\citep{khan:07:relative,lord:18:gknn,holmes:19:estimation,song:20:understanding}.

\textbf{Constructing Expressive Distributions and Benchmarking} \; Besides considering different types of distributions such as (multivariate) normal, uniform and Student, our benchmark is based on the invariance of MI to the family of chosen mappings. Generally MI is not invariant to arbitrary transformations, as due to the data processing inequality $\mutualinfo{ f(X) }{ g(Y) } \le \mutualinfo{X}{Y}$.
%\begin{lemma}[Data Processing Inequality]
%    \label{theorem:data-processing-inequality}
%    Consider r.v. $X$ and $Y$ and measurable mappings $f$ and $g$. Then
%    \[
%        \mutualinfo{ f(X) }{ g(Y) } \le \mutualinfo{X}{Y}.
%    \]
%\end{lemma}
However, under some circumstances MI is preserved, which enables us to create more expressive distributions by transforming the variables.
\begin{restatable}{theorem}{invariancemiinjective}\label{theorem:invariance-mi-injective-mappings}
    Let $\mathcal X$, $\mathcal X'$, $\mathcal Y$ and $\mathcal Y'$ be standard Borel spaces (e.g., smooth manifolds with their Borel $\sigma$-algebras) and $f\colon \mathcal X\to \mathcal X'$ and $g\colon \mathcal Y\to\mathcal Y'$ be continuous injective mappings. 
    Then, for every $\mathcal X$-valued r.v.~$X$ and $\mathcal Y$-valued r.v.~$Y$ it holds that
    \(
        \mutualinfo{X}{Y} = \mutualinfo{ f(X) }{g(Y)}.
    \)
\end{restatable}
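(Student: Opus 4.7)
The plan is to reduce the claim to the classical fact that mutual information depends only on the $\sigma$-algebras generated by the two random variables on the underlying probability space, and then to argue that continuous injections between standard Borel spaces are bimeasurable onto their images, so they do not change these $\sigma$-algebras.

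First I would set up the sup-over-partitions (Dobrushin/Gelfand--Yaglom) characterisation of MI: for any joint distribution,
\[
   \mutualinfo{X}{Y} = \sup \sum_{i,j} P_{XY}(A_i\times B_j)\log\frac{P_{XY}(A_i\times B_j)}{P_X(A_i)P_Y(B_j)},
\]
the supremum being over finite measurable partitions $\{A_i\}$ of $\mathcal X$ and $\{B_j\}$ of $\mathcal Y$. This quantity is manifestly determined by the traces of $P_{XY}$ on $\sigma(X)\otimes\sigma(Y)\subseteq\mathcal F\otimes\mathcal F$, where $\mathcal F$ is the $\sigma$-algebra on the underlying probability space. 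Consequently, to prove $\mutualinfo{X}{Y}=\mutualinfo{f(X)}{g(Y)}$ it suffices to show $\sigma(f(X))=\sigma(X)$ and $\sigma(g(Y))=\sigma(Y)$.

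The main technical ingredient — and the step I expect to be the real obstacle — is establishing that a continuous injective map $f\colon\mathcal X\to\mathcal X'$ between standard Borel spaces is a Borel isomorphism onto its image. I would invoke the Lusin--Souslin theorem (Kechris, \emph{Classical Descriptive Set Theory}, Thm.~15.1): an injective Borel map between standard Borel spaces sends Borel sets to Borel sets; in particular $f(\mathcal X)$ is Borel in $\mathcal X'$ and the set-theoretic inverse $f^{-1}\colon f(\mathcal X)\to\mathcal X$ is Borel measurable. (Continuous injections between standard Borel spaces are automatically Borel, so the hypothesis is satisfied; for the smooth-manifold case mentioned in the statement, second-countability of the manifold guarantees the standard Borel property.) The same applies to $g$.

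Given this, $\sigma(f(X))\subseteq\sigma(X)$ since $f$ is measurable, and conversely, because $X=f^{-1}(f(X))$ with $f^{-1}$ Borel on the Borel set $f(\mathcal X)$, we have $\sigma(X)\subseteq\sigma(f(X))$. Hence $\sigma(f(X))=\sigma(X)$ and likewise $\sigma(g(Y))=\sigma(Y)$, which by the partition-supremum representation above gives $\mutualinfo{f(X)}{g(Y)}=\mutualinfo{X}{Y}$. The case where $P_{XY}$ fails to be absolutely continuous with respect to $P_X\otimes P_Y$ is handled uniformly, as both sides are then $+\infty$ by Pinsker's theorem cited earlier in the paper, and the $\sigma$-algebra argument shows this failure is preserved under the transformation. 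An alternative, more computational route would push the measures forward by $(f,g)$ and apply the change-of-variables formula to the Radon--Nikodym derivative $f=\mathrm d P_{XY}/\mathrm d(P_X\otimes P_Y)$, but this ultimately relies on the same Lusin--Souslin fact and is no shorter.
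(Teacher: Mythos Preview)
Your proposal is correct and follows essentially the same strategy as the paper: both invoke the Lusin--Souslin theorem (Kechris, Thm.~15.1) to show that continuous injections between standard Borel spaces send Borel sets to Borel sets, and both combine this with the sup-over-finite-partitions characterisation of mutual information. The only difference is packaging: the paper constructs, for each finite partition $\{E_i\}$ of $\mathcal X$, an explicit matching partition $\{G_0,\dots,G_n\}$ of $\mathcal X'$ (with $G_0=\mathcal X'\setminus f(\mathcal X)$ and $G_i=f(E_i)$) and verifies the sums agree term-by-term, whereas you encapsulate the same computation as the identity $\sigma(f(X))=\sigma(X)$ of sub-$\sigma$-algebras on the underlying probability space.
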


This theorem has been studied before by \citet[Appendix]{kraskov:04:ksg}, who consider diffeomorphisms and by \citet[Th.~3.7]{Polyanskiy-Wu-Information-Theory}, who assume measurable injective functions with measurable left inverses.
For completeness we provide a proof which covers continuous injective mappings in Appendix \ref{appendix:mi-invariance-proof}.
In particular, each of $f$ and $g$ can be a homeomorphism, a diffeomorphism, or a topological embedding.
Embeddings are allowed to increase the dimensionality of the space, so even if $X$ and $Y$ have probability density functions, $f(X)$ and $g(Y)$ do not need to have them.
Neither of these deformations change MI.
Thus, using Theorem~\ref{theorem:invariance-mi-injective-mappings}, we can create more expressive distributions $P_{f(X)g(Y)}$ by sampling from $P_{XY}$ and transforming the samples to obtain a data set
\(
    \big((f(x_1), g(y_1)), \dotsc, (f(x_N), g(y_N) \big) \in (\mathcal X'\times \mathcal Y')^N.
\)

While offering a tool to construct expressive distributions, another valuable perspective to consider with this problem is estimation invariance: although MI is invariant to proposed changes, the estimate may not be.
More precisely, applying an estimator to the transformed data set results in the induced r.v.\
\(
    E_N^{f(X)g(Y)}
\)
and its distribution.
If the estimator were truly invariant, one would expect that $\mathbb E\left[ E_N^{f(X)g(Y)}\right]$ should equal $\mathbb E\left[E^{XY}_N\right]$.
This invariance is rarely questioned and often implicitly assumed as given (see e.g.~\citet{tschannen:20:mi-in-rep} or \citet[Sec.~32.2.2.3]{pml2Book}), but as we will see in Sec.~\ref{subsection:challenge-spirals}, finite-sample estimates are not generally invariant.

\section{Proposed Benchmark}
\label{section:benchmark}

In this section, we outline forty tasks included in the benchmark that cover a wide family of distributions, including varying tail behaviour, sparsity of interactions, multiple modes in PDF, and transformations that break colinearity.
As our base distributions, we selected multivariate normal and Student distributions, which were transformed with continuous injective mappings.

We decided to focus on the following phenomena:
\begin{enumerate}
    \item \textbf{Dimensionality.} High-dimensional data are collected in machine learning and natural sciences \citep[Ch.~1]{Buehlmann-Statistics-high-dimensional-data}. We therefore change the dimensions of the $\mathcal X$ and $\mathcal Y$ spaces between $1$ and $25$.
    \item \textbf{Sparsity.} While the data might be high-dimensional, the effective dimension may be much smaller, due to correlations between different dimensions or sparsity~\citep{Lucas-2006}. We therefore include distributions in which some of the dimensions represent random noise, which does not contribute to the mutual information, and perform an additional study in Section~\ref{subsection:challenge-sparsity}.
    \item \textbf{Varying MI.} Estimating high MI is known to be difficult \citep{McAllester-2020-FormalLimitations}. However, we can often bound it in advance --- if there are 4000 image classes, MI between image class and representation is at most $\log 4000 \approx 8.3$ nats. In this section, we focus on problems with MI varying up to 2 nats. We additionally consider distributions with higher MI in Section~\ref{subsection:challenge-high-mi}.
    \item \textbf{Long tails.} As \citet{taleb:2020} and \citet{zhang:2021-deep-long-tailed} argue, many real-world distributions have long tails. To model different tails we consider multivariate Student distributions as well as transformations lengthening the tails. We conduct an additional study in Section~\ref{subsection:challenge-long-tails}.
    \item \textbf{Robustness to diffeomorphisms.} As stated in Theorem~\ref{theorem:invariance-mi-injective-mappings}, mutual information is invariant to reparametrizations of random variables by diffeomorphisms. We however argue that when only finite samples are available, invariance in mutual information estimates may not be achieved. To test this hypothesis we include distributions obtained by using a diverse set of diffeomorphisms and continuous injective mappings.
    We further study the robustness to reparametrizations in Section~\ref{subsection:challenge-spirals}.
\end{enumerate}

While we provide a concise description here, precise experimental details can be found in Appendix~\ref{appendix:additional-information-on-benchmark-tasks} and we visualise selected distributions in Appendix~\ref{appendix:visualizations}.
We first describe tasks employing one-dimensional variables.

\ourparagraph{Bivariate Normal} For multivariate normal variables MI depends only on the correlation matrix. We will therefore consider a centered bivariate normal distribution $P_{XY}$ with $\Cor(X, Y) = \rho$ and $\mutualinfo{X}{Y}=-0.5\log\left(1-\rho^2\right)$. We chose $\rho=0.75$.

\ourparagraph{Uniform Margins}
As a first transformation, we apply the Gaussian CDF $F$ to Gaussian r.v.~$X$ and $Y$ to obtain r.v.~$X' = F(X)$ and $Y'=F(Y)$.
%By transforming the marginals via applying the Gaussian CDF $F$ we obtain r.v.~$X' = F(X)$ and $Y'=F(Y)$.
It is a standard result in copula theory (see Lemma~\ref{lemma:pushforward-via-cdf-is-uniform} or e.g., \citet{nelsen2007copulas} for an introduction to copulas) that $F(X) \sim \mathrm{Uniform}(0, 1)$ resp.~$F(Y)\sim \mathrm{Uniform}(0, 1)$.
The joint distribution $P_{X'Y'}$, however, is not uniform. Mutual information is preserved, $\mutualinfo{X'}{Y'} {=} \mutualinfo{X}{Y}$. 
For an illustration, see Fig.~\ref{fig:distribution-visualisation}.
A distribution \texttt{P} transformed with Gaussian CDF $F$ will be denoted as \texttt{Normal CDF @ P}.

\ourparagraph{Half-Cube Map} To lengthen the tails we applied the half-cube homeomorphism $h(x) {=} |x|^{3/2} \sgn x$ to Gaussian variables $X$ and $Y$. % \!\footnote{This map is not a diffeomorphism because of its behaviour at the origin.}
We visualize an example in Fig. \ref{fig:distribution-visualisation} and denote a transformed distribution as \texttt{Half-cube @ P}.

\ourparagraph{Asinh Mapping} To shorten the tails we applied inverse hyperbolic sine function $\mathrm{asinh}\, x = \log\left(x +\sqrt{1+x^2}\right)$. A distribution transformed with this mapping will be denoted as \texttt{Asinh @ P}.

\ourparagraph{Wiggly Mapping} To model non-uniform lengthscales, we applied a mapping
\[
    w(x) = x + \sum_i a_i\sin(\omega_i x + \varphi_i), \qquad \sum_i |a_i\omega_i| < 1.
\]
Due to the inequality constraint this mapping is injective and preserves MI.
The parameter values can be found in Appendix \ref{appendix:additional-information-on-benchmark-tasks}.
We denote the transformed distribution as \texttt{Wiggly @ P}.

\textbf{Bimodal Variables} \; Applying the inverse CDF of a two-component Gaussian mixture model to correlated variables $X$ and $Y$ with uniform margins we obtained a joint probability distribution $P_{X'Y'}$ with four modes (presented in Fig.~\ref{fig:distribution-visualisation}). We call this distribution \texttt{Bimodal} and provide technical details in Appendix~\ref{appendix:additional-information-on-benchmark-tasks}.

\renewcommand{\epsilon}{\varepsilon}

\ourparagraph{Additive Noise} Consider independent r.v.~$X\sim \mathrm{Uniform}(0, 1)$ and $N\sim \mathrm{Uniform}(-\epsilon, \epsilon)$, where $\epsilon$ is the noise level.
For $Y = X + N$, it is possible to derive $\mutualinfo{X}{Y}$ analytically (see Appendix~\ref{appendix:additional-information-on-benchmark-tasks} for the formula and the parameter values) and we will call this distribution \texttt{Uniform (additive noise=$\epsilon$)}.

\ourparagraph{Swiss Roll Embedding} As the last example in this section, we consider a mixed case in which a one-dimensional r.v.~$X\sim\mathrm{Uniform}(0, 1)$ is smoothly embedded into two dimensions via the Swiss roll mapping, a popular embedding used to test dimensionality reduction algorithms (see Fig.~\ref{fig:distribution-visualisation} for visualisation and Appendix \ref{appendix:additional-information-on-benchmark-tasks} for the formula).
Note that the obtained distribution does not have a PDF with respect to the Lebesgue measure on $\mathbb R^3$.

Next, we describe the tasks based on multivariate distributions.

\ourparagraph{Multivariate Normal} We sampled $(X, Y) = (X_1, \dotsc, X_m, Y_1, \dotsc, Y_n)$ from the multivariate normal distribution.
We model two correlation structures: ``dense'' interactions with all off-diagonal correlations set to 0.5 and ``2-pair'' interactions where $\Cor(X_1, Y_1) = \Cor(X_2, Y_2) = 0.8$ and there is no correlation between any other (distinct) variables (\texttt{Multinormal (2-pair)}).
For a latent-variable interpretation of these covariance matrices we refer to Appendix \ref{section:covariance-matrix}.

\ourparagraph{Multivariate Student}
To see how well the estimators can capture MI contained in the tails of the distribution, we decided to use multivariate Student distributions (see Appendix~\ref{appendix:additional-information-on-benchmark-tasks}) with dispersion matrix\footnote{The dispersion matrix of multivariate Student distribution is different from its covariance, which does not exist for $\nu \le 2$.} set to the identity $I_{m+n}$ and $\nu$ degrees of freedom.
Contrary to the multivariate normal case, in which the identity matrix used as the covariance would lead to zero MI, the variables $X$ and $Y$ can still interact through the tails.

\ourparagraph{Transformed Multivariate Distributions}
As the last set of benchmark tasks we decided to transform the multivariate normal and multivariate Student distributions. 
We apply mappings described above to each axis separately.
For example, applying normal CDF to the multivariate normal distribution will map it into a distribution over the cube $(0, 1)^{m+n}$ with uniform marginals.
To mix different axes we used a spiral diffeomorphism (denoted as \texttt{Spiral @ P}); we defer the exact construction to Appendix~\ref{appendix:additional-information-on-benchmark-tasks} but we visualise this transformation in Fig.~\ref{fig:spiral-visualisation}.

\ourparagraph{Estimators} In our benchmark, we include a diverse set of estimators. Following recent interest, we include four neural estimators, i.e., Donsker--Varadhan (D-V) and MINE~\citep{belghazi:18:mine}, InfoNCE~\citep{oord:18:infonce}, and the NWJ estimator \citep{NWJ2007,Nowozin-fGAN,poole2019variational}.
As a representative for model-based estimators, we implement an estimator based on canonical correlation analysis (CCA)~\citep[Ch. 28]{pml2Book}, which assumes that the joint distribution is multivariate normal.
For more classical estimators, we include the Kraskov, St{\"o}gbauer and Grassberger (KSG) estimator~\citep{kraskov:04:ksg} as the most well-known $k$-nearest neighbour-based estimator, a recently proposed kernel-based estimator (LNN)~\citep{gao:17:lnn}, as well as an estimator based on histograms and transfer entropy (both implemented in Julia's \texttt{TransferEntropy} library).
A detailed overview of the different classes of estimators is provided in Appendix~\ref{appendix:estimators}.
Further, we describe the the hyperparameter selection in Appendix~\ref{sec:additional-experiments}.

\ourparagraph{Preprocessing} As a data preprocessing step, we standardize each dimension using the mean and standard deviation calculated from the sample. Other preprocessing stategies are also possible, but we did not observe a consistent improvement of one preprocessing strategy over the others (see Appendix~\ref{sec:additional-experiments}).

\subsection{Benchmark Results}
 
\begin{figure*}[t]
    \centering
    \includegraphics[width=\textwidth]{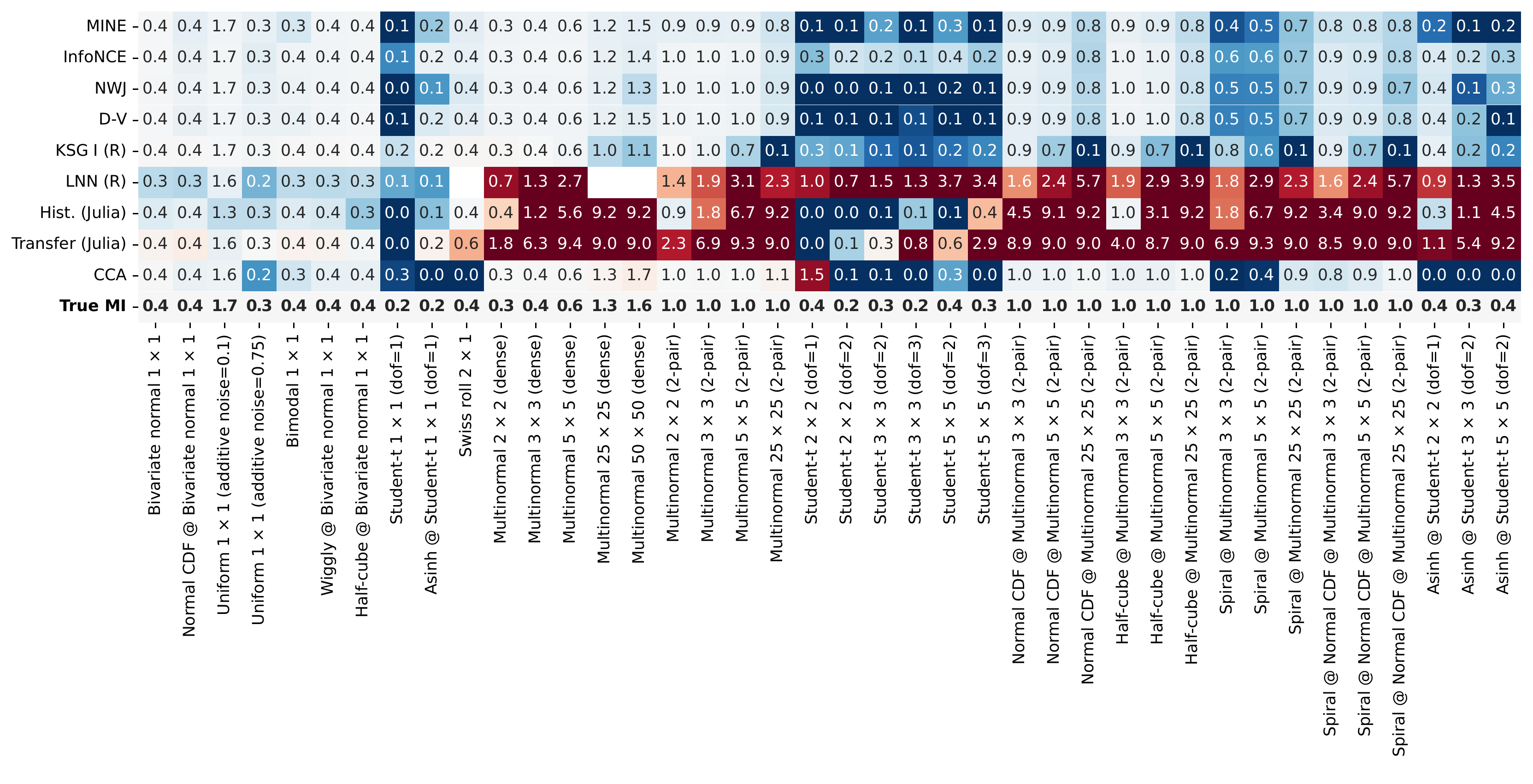}
    \vspace{-2em}
    \caption{Mean MI estimates of nine estimators over $n=10$ samples with $N=10\,000$ points each against the ground-truth value on all benchmark tasks grouped by category. Color indicates relative negative bias (\textcolor{NavyBlue}{blue}) and positive bias (\textcolor{Mahogany}{red}). Blank entries indicate that an estimator experienced numerical instabilities.}
    \label{fig:benchmark-heatmap}
\end{figure*}

We show the results for $N=10\,000$ data points in Fig.~\ref{fig:benchmark-heatmap}. Neural estimators and KSG perform better than alternative estimators on most problems, with KSG having often low sample requirements (see Appendix~\ref{sec:additional-experiments}). The simple model-based estimator CCA obtained excellent performance at low sample sizes (see Appendix~\ref{sec:additional-experiments}), even for slightly transformed multivariate normal distributions. Finally, LNN and the two Julia estimators (histogram and transfer entropy), work well in low dimension but are not viable in medium- and high-dimensional problems ($\dim X + \dim Y \ge 4$).

Overall, we observed that for most $1{\times}1$-dimensional and multivariate normal problems MI can be reliably estimated. Difficulties arise for sparse interactions, atypical (Student) distributions, and for significantly transformed distributions. This suggests that the typical evaluations of MI estimators are overly optimistic, focusing on relatively simple problems.

% Sparsity
KSG is able to accurately estimate MI in multivariate normal problems, however performance drops severely on tasks with sparse interactions (2-pair tasks). For $5{\times}5$ dimensions the estimate is about 70\% of the true MI, and for $25{\times}25$ dimensions it drops to 10\%, which is consistent with previous observations~\citep{marx:22:geodesic}. Meanwhile, performance of neural estimators is stable. We study the effect of sparsity in more detail in Sec.~\ref{sec:selected-challenges}.

% Student
Student distributions have proven challenging to all estimators. This is partially due to the fact that long tails can limit estimation quality (see Sec.~\ref{sec:selected-challenges}), and indeed, applying a tail-shortening $\mathrm{asinh}$ transformation improves performance. However, performance still remains far below multivariate normal distributions with similar dimensionality and information content, showing that long tails are only part of the difficulty. For neural estimators, the issue could be explained by the fact that the critic is not able to fully learn the pointwise mutual information (see Appendix~\ref{sec:additional-experiments}).

% Diffeomorphisms
Interestingly, the performance of CCA, which assumes a linear Gaussian model, is often favorable compared to other approaches and has very low sample complexity (see Appendix~\ref{sec:additional-experiments}).
This is surprising, since in the case of transformed distributions the model assumptions are not met.
The approach fails for Student distributions (for which the correlation matrix either does not exist or is the identity matrix), the Swiss roll embedding and multivariate normal distributions transformed using the spiral diffeomorphism.
Since the spiral diffeomorphism proved to be the most challenging transformation, we study it more carefully in Sec.~\ref{sec:selected-challenges}.

\section{Selected Challenges}\label{sec:selected-challenges}

In this section we investigate distributions which proved to be particularly challenging for the tested estimators in our benchmark: sparse interactions, long tailed distributions and invariance to diffeomorphisms. Additionally we investigate how the estimators perform for high ground truth MI.

\subsection{Sparsity of Interactions}\label{subsection:challenge-sparsity}

In the main benchmark (Fig.~\ref{fig:benchmark-heatmap}) we observed that estimation of MI with ``2-pair'' type of interactions is considerably harder for the KSG estimator than the ``dense'' interactions.
On the other hand, neural estimators were able to pick the relevant interacting dimensions and provide better estimates of MI in these sparse cases.

We decided to interpolate between the ``dense'' and ``2-pair'' interaction structures using a two-stage procedure with real parameters $\alpha$ and $\lambda$ and the number of strongly interacting components $K$.
While the precise description is in Appendix~\ref{section:covariance-matrix}, we can interpret $\alpha$ as controlling the baseline strength of interaction between every pair of variables in $\{ X_1, \dots, X_{10}, Y_1, \dots Y_{10} \}$ and $\lambda$ as the additional strength of interaction between pairs of variables $(X_1, Y_1), \dotsc, (X_K, Y_K)$.
Dense interaction structure has $\lambda=0$ and 2-pair interaction structure has $\alpha=0$ and $K=2$.
First, we set $K=10$ and $\lambda \approx 0$ and decrease $\alpha$ raising $\lambda$ at the same time to maintain constant MI of 1 nat.
When $\alpha=0$, whole information is contained in the pairwise interactions $(X_1, Y_1), \dotsc, (X_{10}, Y_{10})$.
We then decrease $K$ and increase $\lambda$, still maintaining constant MI.

\begin{figure*}[t]
    \centering
    \includegraphics[height=3.2cm]{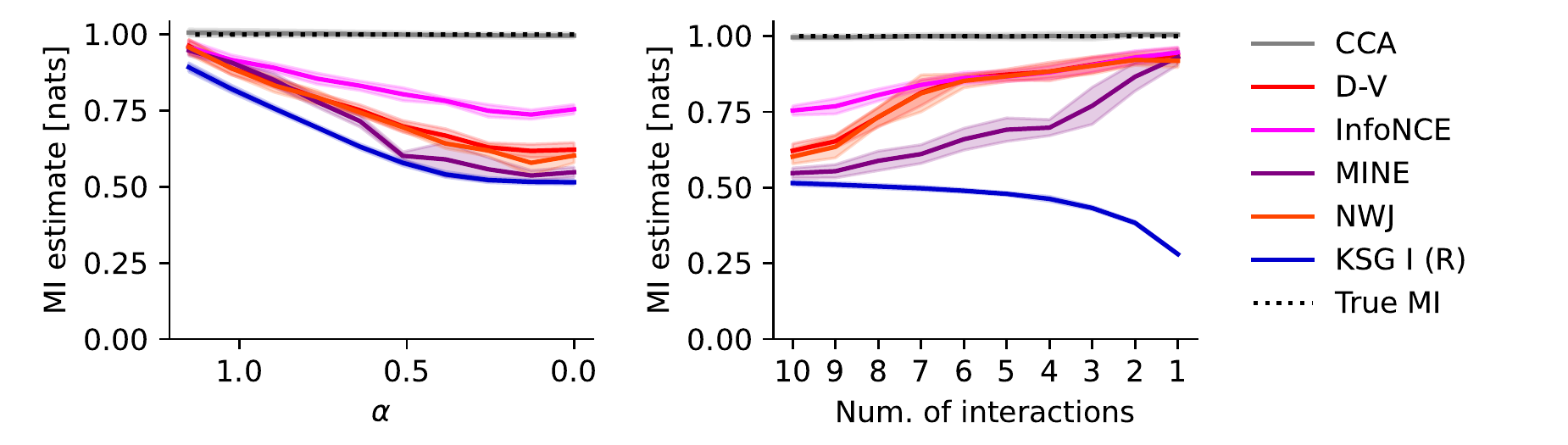}
    \vspace{-0.5em}
    \caption{%
    Two-stage interpolation between dense and sparse matrices.
    From left to right the sparsity of interactions increases.
    Shaded regions represent the sample standard deviation.
    }
    \label{fig:sparsity}
\end{figure*}

In Fig.~\ref{fig:sparsity} we observe that the performance of all estimators considered (apart from CCA, a model-based approach suitable for multivariate normal distributions) degrades when the interactions between every pair of variables become sparser (lower $\alpha$).
In particular, even neural estimators are not able to model the information in ten interacting pairs of variables.
However, when we decrease the number of interacting pairs of variables, the performance of the neighborhood-based KSG estimator is qualitatively different from the neural estimators: performance of KSG steadily deteriorates, while neural estimators can find (some of the) relevant pairs of variables.

This motivates us to conclude that in the settings where considered variables are high-dimensional and the interaction structure is sparse, neural estimators can offer an advantage over neighborhood-based approaches.

\subsection{Long-Tailed Distributions}
\label{subsection:challenge-long-tails}
In Fig.~\ref{fig:tails} we investigate two different ways to lengthen the tails.
First, we lengthen the tails of a multivariate normal distribution using the mapping $x\mapsto |x|^{k} \sgn x$ (with $k\ge 1$) applied to each dimension separately.
In this case, we see that the performance of CCA, KSG, and neural estimators is near-perfect for $k$ close to $1$ (when the distribution $P_{XY}$ is close to multivariate normal), but the performance degrades as the exponent increases.
Second, we study multivariate Student distributions varying the degrees of freedom: the larger the degrees of freedom, the lower the information content in the tails of the distribution.
Again, we see that that the performance of CCA, KSG, and neural estimators is near-perfect for high degrees of freedom for which the distribution is approximately normal. For low degrees of freedom, these estimators significantly underestimate MI, with the exception of CCA which gives estimates with high variance, likely due to the correlation matrix being hard to estimate.

Overall, we see that long tails complicate the process of estimating MI.
The Student distribution is particularly interesting, since even after the tails are removed (with $\mathrm{asinh}$ or preprocessing strategies described in Appendix~\ref{sec:additional-experiments}) the task remains challenging.
We suspect that the MI might be contained in regions which are rarely sampled, making the MI hard to estimate.
When neural estimators are used, pointwise mutual information learned by the critic does not match the ground truth (see Appendix~\ref{sec:additional-experiments}).

Hence, we believe that accurate estimation of MI in long-tailed distributions is an open problem.
Practitioners expecting long-tailed distributions should approach this task with caution.

\begin{figure*}[t]
    \centering
    \includegraphics[width=0.6\linewidth]{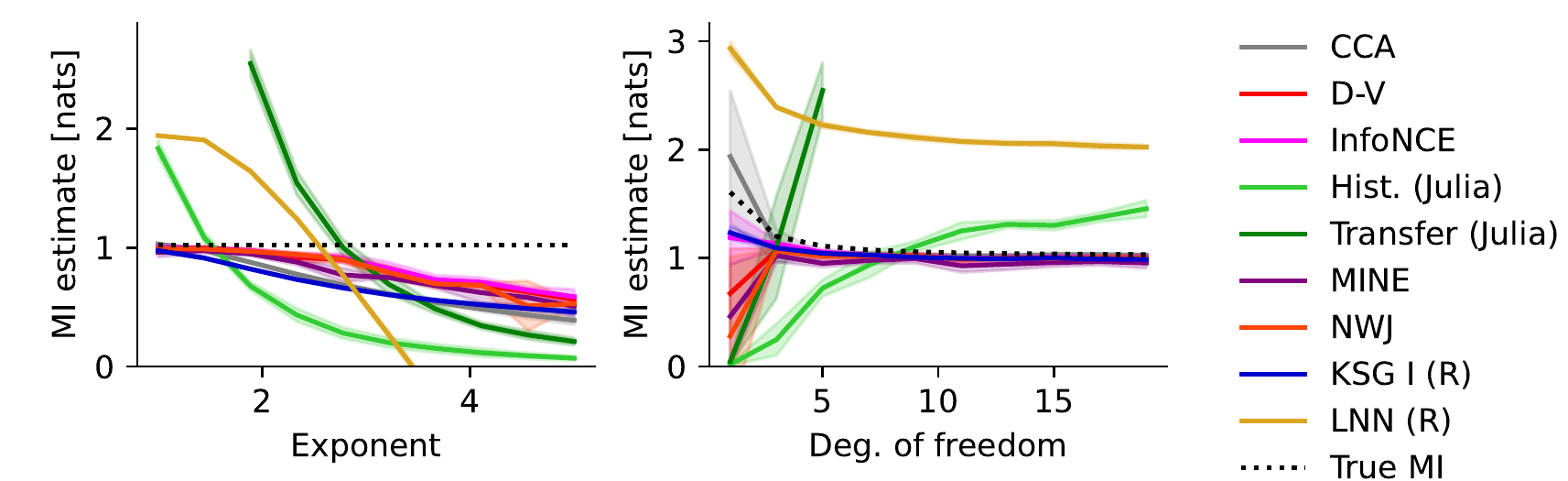}
    \vspace{-0.5em}
    \caption{%
    MI estimates as a function of the information contained in the tail of the distribution. %
    Left: lenghtening the tails via $x\mapsto |x|^k \sgn x$ mapping with changing $k$.
    Right: increasing the degrees of freedom in multivariate Student distribution shortens the tails. %
    Shaded regions represent the sample standard deviation. %
    }
    \label{fig:tails}
\end{figure*}

\subsection{Invariance to Diffeomorphisms}
\label{subsection:challenge-spirals}

\begin{figure}[t]
    \begin{minipage}[t]{0.5\linewidth}
    \centering
    \hspace{5pt}\includegraphics[height=4cm]{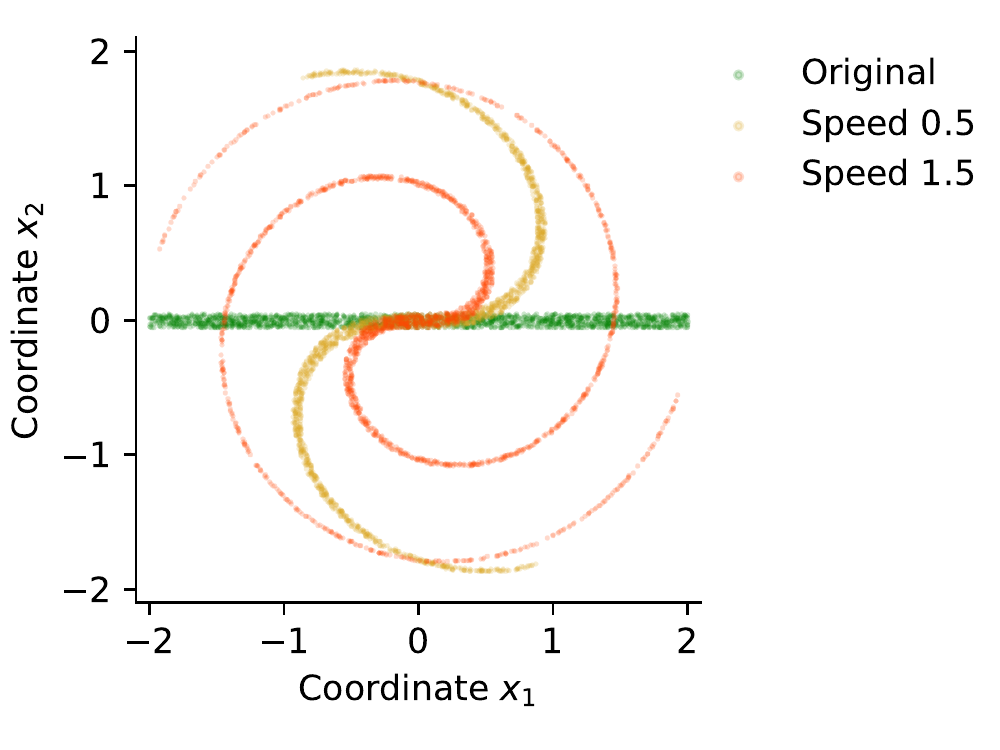}
    \end{minipage}
    \begin{minipage}[t]{0.5\linewidth}
    \centering
    \includegraphics[height=4cm]{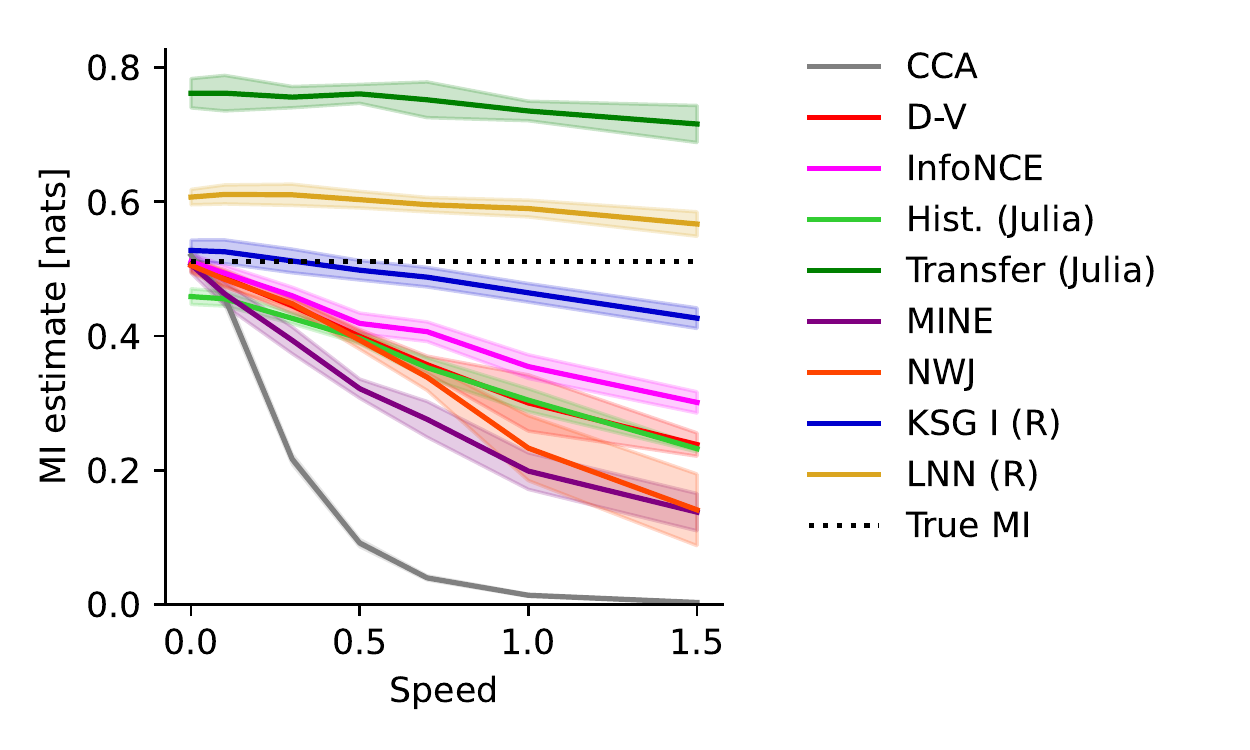}
    \end{minipage}%
    \vspace{-1em}
    \caption{Left: Spiral diffeomorphism with different speed parameter $v$.
    %Samples from a uniform distribution over a thin rectangle (red) have been transformed via two ``spirals'', with different speed parameter $v$. 
    Right: Performance of considered estimators for increasing speed.}
    %which drops when the the space is more distorted (higher speed $v$). We show the sample mean and std for $n=5$.}
    \label{fig:spiral-visualisation}
\end{figure}

The benchmark results (Fig.~\ref{fig:benchmark-heatmap}) as well as the experiments with function $x\mapsto |x|^k\sgn x$ lengthening tails (Fig.~\ref{fig:tails}) suggest that even if ground-truth MI is preserved by a transformation (see Theorem~\ref{theorem:invariance-mi-injective-mappings}), the finite-sample estimates may not be invariant.
This can be demonstrated using the spiral diffeomorphism.

We consider isotropic multivariate normal variables $X = (X_1, X_2)\sim \mathcal N(0, I_2)$ and $Y\sim \mathcal N(0, 1)$ with $\Cor(X_1, Y)=\rho$.
Then, we apply a spiral diffeomorphism $s_v\colon \mathbb R^2\to\mathbb R^2$ to $X$, where 
\[
s_v(x) = \begin{pmatrix}
    \cos v||x||^2 & -\sin v||x||^2\\
    \sin v||x||^2 & \cos v||x||^2
\end{pmatrix}
\]
is the rotation around the origin by the angle $v||x||^2$ (see Fig.~\ref{fig:spiral-visualisation}; for a theoretical study of spiral diffeomorphisms see Appendix~\ref{appendix:spirals}).
In Fig.~\ref{fig:spiral-visualisation}, we present the estimates for $N=10\,000$, $\rho=0.8$ and varying speed $v$.

In general, the performance of the considered estimators deteriorates with stronger deformations.
This is especially visible for the CCA estimator, as MI cannot be reliably estimated in terms of correlation.

\subsection{High Mutual Information}
\label{subsection:challenge-high-mi}

\citet{gao:15:strongly-dependent} proves that estimation of MI between strongly interacting variables requires large sample sizes.
\citet{poole2019variational,song:20:understanding} investigate estimation of high MI using multivariate normal distributions in the setting when the ground-truth MI is changing over the training time.
We decided to use our family of expressive distributions to see the practical limitations of estimating MI.
We chose three one-parameter distribution families: the $3{\times}3$ multivariate normal distribution with correlation $\rho=\Cor(X_1, Y_1)=\Cor(X_2, Y_2)$, and its transformations by the half-cube mapping and the spiral diffeomorphism with $v=1/3$.
We changed the parameter to match the desired MI.

In Fig.~\ref{fig:high-mi} we see that for low enough MI neural estimators and KSG reliably estimate MI for all considered distributions, but the moment at which the estimation becomes inaccurate depends on the distribution.
For example, for Gaussian distributions, neural estimators (and the simple CCA baseline) are accurate even up to 5 nats, while after the application of the spiral diffeomorphism, the estimates become inaccurate already at 2 nats.

We see that in general neural estimators perform well on distributions with high MI.
The performance of CCA on the multivariate normal distribution suggests that well-specified model-based estimators may require lower number of samples to estimate high mutual information than model-free alternatives (cf.~\citet{gao:15:strongly-dependent}).

\begin{figure*}[t]
    \centering
    \includegraphics[width=\linewidth]{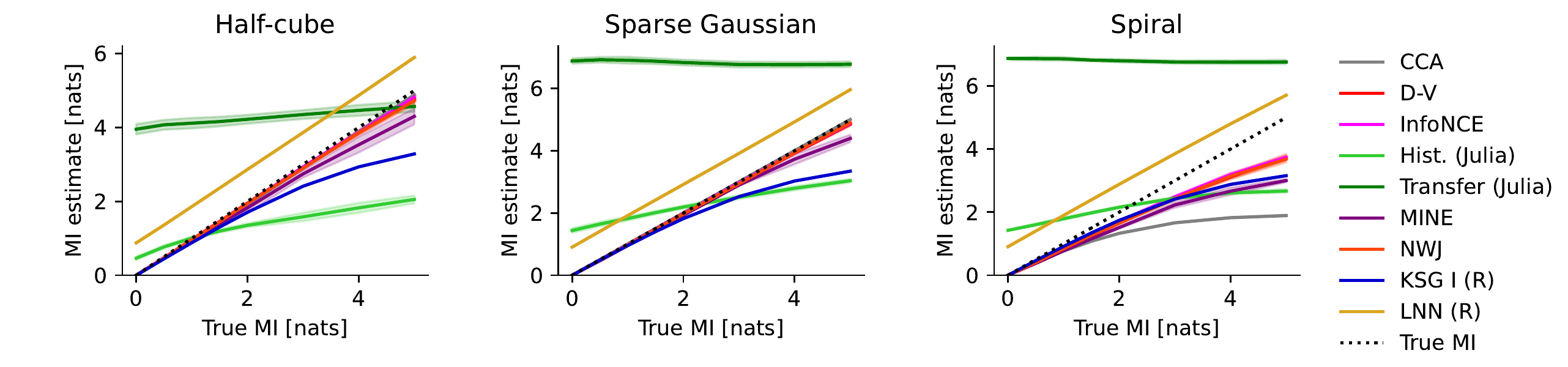}
    \vspace{-0.5em}
    \caption{%
    We select three one-parameter families of distributions varying the MI value ($x$-axis and dashed line). We use $N=10\,000$ data points and $n=5$ replicates of each experiment.
    }
    \label{fig:high-mi}
\end{figure*}

\section{Related Work}
\label{section:related-work}

\newcommand{\tikzxmark}{%
\tikz[scale=0.23] {
    \draw[line width=0.7,line cap=round] (0,0) to [bend left=6] (1,1);
    \draw[line width=0.7,line cap=round] (0.2,0.95) to [bend right=3] (0.8,0.05);
}}

\begin{table*}
\tiny
\begin{center}
%\centering
\caption{\label{table:comparison} Comparison between benchmarks.}
\setlength\tabcolsep{1pt}
\begin{tabular}{p{2cm} C{1.1cm} C{1.1cm} C{1.3cm} C{1.2cm} }
\toprule
 & \citet{khan:07:relative} &  \citet{poole2019variational} & \citet{song:20:understanding} & Proposed Benchmark \\
 \midrule
Gaussian Distrib.  & \checkmark & \checkmark &\checkmark & \checkmark \\ 
Heavy-Tailed Distrib. &  (\checkmark) & (\checkmark) & \tikzxmark & \checkmark  \\
Sparse/Dense Inter. & \tikzxmark & \tikzxmark & \tikzxmark & \checkmark \\
High MI & \tikzxmark & (\checkmark) & (\checkmark) & \checkmark \\
Invariance to Homeom.  & \tikzxmark & \tikzxmark & \tikzxmark & \checkmark \\
Self-Consistency & \tikzxmark & \tikzxmark & \checkmark & \tikzxmark \\
Language Indep. & \tikzxmark & \tikzxmark & \tikzxmark & \checkmark \\
Code Available & \tikzxmark & \tikzxmark & \checkmark & \checkmark \\
\bottomrule
\end{tabular}
\end{center}
\end{table*}

\textbf{Existing Benchmarks} \;
\citet{khan:07:relative} used one-dimensional variables and an additive noise assumption to evaluate the estimators based on histograms, nearest neighbors, and kernels.
However, the original code and samples are not available.
\citet{song:20:understanding} proposed to evaluate neural estimators based on variational lower bounds \citep{oord:18:infonce,belghazi:18:mine} on image data.
However, besides the multivariate Gaussian which they also evaluate, the ground-truth MI is not available. 
Therefore, they proposed self-consistency checks, which include $\mutualinfo{X}{Y} {=} 0$ for independent $X$ and $Y$, and preservation of the data processing inequality.
\citet{poole2019variational} study neural estimators in settings employing Gaussian variables, affine transformation followed by axis-wise cubic transformation and investigate the high MI cases.
In our benchmark, we focus on distributions with known MI and information-preserving continuous injective mappings such as homeomorphisms, evaluate heavy tail distributions, and compare representative estimators from different classes and platforms.
We summarize the differences between the benchmarks in Table~\ref{table:comparison}.\!\footnote{
\citet{khan:07:relative} consider two one-dimensional r.v.\ related by additive noise, in which one of the r.v.\ is parametrized with a cubic function.
This implicitly lengthens the tails, so we marked this with (\checkmark); similarly, \citet{poole2019variational} consider cubic transformation of a multivariate normal variable.
The experimental setup of estimation of MI considered by \citet{poole2019variational} and \citet{song:20:understanding} requires changing ground truth MI over training time of a neural estimator. This procedure differs from the usual approaches to estimating MI, so we marked it with (\checkmark).
}

\textbf{Causal Representation Learning and Nonlinear ICA}\;
Recent work on disentanglement and nonlinear independent component analysis (ICA) aims at identifying causal sources up to an ambiguous transformation \citep{xi-2022-indeterminacy-bens-paper}.
Examples include orthogonal transformations \citep{zimmermann-2021-contrastive}, permutations and diffeomorphisms applied along specific axes \citep{khemakhem-2020-ivae}, or the whole diffeomorphism group \citep{kuegelgen-2021-self-supervised,daunhawer:23:multimodal-identifiability}.
To check whether the learned representations indeed capture underlying causal variables, it is important to use estimators which are invariant to the ambiguous transformations specific to the model.
Our results suggest that such requirements are not met by MI estimators and practitioners should carefully choose the used metrics, so they are not only theoretically invariant, but also can be robustly estimated in practice.
%Our results suggest that even mutual information, theoretically supposed to be invariant to the aforementioned transformations, may in practice not be estimated in an invariant manner.

\section{Conclusions and Limitations}
\label{section:discussion-conclusion}

Venturing beyond the typical evaluation of mutual information estimators on multivariate normal distributions, we provided evidence that the evaluation on Gaussian distributions results in overly optimistic and biased results for existing estimators.
In particular, a simple model-based approach as CCA is already sufficient if the distribution is (close to) multivariate normal, highlighting the effect of using available prior information about the problem to design efficient estimators. To provide a basis for more thorough evaluation, we designed a benchmark, compatible with any programming language, with forty general tasks focusing on selected problems, as estimation with sparse interactions, the effects of long tails, estimation in problems with high MI, and (only theoretical) invariance to diffeomorphisms.

Our results suggest that in low- and medium-dimensional problems the KSG estimator remains a solid baseline, requiring a lower number of samples than the alternatives and no parameter tuning.
In high-dimensional settings where the interactions are sparse, neural estimators offer an advantage over classical estimators being able to more consistently pick up the signal hidden in the high-dimensional space. We further provide evidence that although MI is invariant to continuous injective transformations of variables, in practice the estimates can differ substantially.
Appropriate data preprocessing may mitigate some of these issues, but no strategy clearly outperforms the others (see Appendix~\ref{sec:additional-experiments}).

We hope that the proposed benchmark encourages the development of new estimators, which address the challenges that we highlight in this paper.
It can be used to assess invariance to homeomorphisms, understand an estimator's behavior in the situations involving distributions of different complexity, low-data behaviour, high mutual information or to diagnose problems with implementations.
We believe that some parts of the benchmark could also be used to assess whether other estimators of other statistical quantities, such as kernel dependence tests, transfer entropy or representation similarity measures,
can be reliably estimated on synthetic problems and if they are robust to selected classes of transformations.

\paragraph{Limitations and Open Problems}
Although the proposed benchmark covers a wide family of distributions, its main limitation is the possibility to only transform the distribution $P_{XY}$ via transformations $f\times g$ to $P_{f(X)g(Y)}$ starting from multivariate normal and Student distributions.
In particular, not every possible joint distribution is of this form and extending the family of distributions which have known MI and allow efficient sampling is the natural direction of continuation.
We are, however, confident that due to the code design it will be easy to extend the benchmark with the new distributions (and their transformations) when they appear.

Estimation of information in multivariate Student distributions remains a challenge for most estimators.
Although we provide the evidence that distributions with longer tails pose a harder challenge for considered estimators, shortening the tails with $\mathrm{asinh}$ transform did not alleviate all the issues.
In neural estimators this can be partly explained by the fact that the critic may not learn to approximate pointwise mutual information (up to an additive constant) properly, as demonstrated in Appendix~\ref{sec:additional-experiments}, but the question of what makes the neighborhood-based estimator, KSG, fail, remains open.

As noted, unless strong inductive biases can be exploited (as illustrated for CCA), estimation of high MI is an important challenge. An interesting avenue towards developing new estimators can be by incorporating a stronger inductive bias, which could also take a form of development of better data normalization strategies and, in case of neural estimators, critic architectures.
Better normalization strategies may make the estimators more invariant to considered transformations.

\paragraph{Acknowledgments and Disclosure of Funding}
We would like to thank Craig Hamilton for the advice on scientific writing and help with the abstract.
FG was supported by the Norwegian Financial Mechanism GRIEG-1 grant operated by 
Narodowe Centrum Nauki (National Science Centre, Poland) 2019/34/H/NZ6/00699.
PC and AM were supported by a fellowship from the ETH AI Center.

\newpage
\bibliography{bibliography}
\bibliographystyle{plainnat}

%%%%%%%%%%%%%%%%%%%%%%%%%%%%%%%%%%%%%%%%%%%%%%%%%%%%%%%%%%%%%%%%%%%%%%%%%%%%%%%
%%%%%%%%%%%%%%%%%%%%%%%%%%%%%%%%%%%%%%%%%%%%%%%%%%%%%%%%%%%%%%%%%%%%%%%%%%%%%%%
% APPENDIX
%%%%%%%%%%%%%%%%%%%%%%%%%%%%%%%%%%%%%%%%%%%%%%%%%%%%%%%%%%%%%%%%%%%%%%%%%%%%%%%
%%%%%%%%%%%%%%%%%%%%%%%%%%%%%%%%%%%%%%%%%%%%%%%%%%%%%%%%%%%%%%%%%%%%%%%%%%%%%%%
\newpage
\appendix

\begin{center}
    \rule{\linewidth}{3pt}
    \vspace{-0.25cm}
    
    {\huge Appendix}
    \rule{\linewidth}{1pt}
\end{center}

%\onecolumn
\renewcommand{\contentsname}{Table of Contents}
\tableofcontents
\newpage
\addtocontents{toc}{\protect\setcounter{tocdepth}{2}}

\section{Invariance to Continuous Injective Mappings}
\label{appendix:mi-invariance-proof}

In this section, we provide a proof for Theorem \ref{theorem:invariance-mi-injective-mappings}, formally proving the invariance of mutual information with respect to continuous injective mappings.
First, we recall the standard definitions from Chapters 1 and 2 of \citet{pinsker1964information}.

\begin{definition}
    \label{definition:partition}
    Let $(\Omega, \mathcal A)$ be a measurable space.
    A family of sets $\mathcal E\subseteq \mathcal A$ is a partition of $\Omega$ if for every two distinct $E_1$ and $E_2$ in $\mathcal E$ it holds that $E_1\cap E_2=\varnothing$ and that $\bigcup \mathcal E = \Omega$.
    If $\mathcal E$ is finite, we call it a finite partition.
\end{definition}

\begin{definition}[Mutual Information]\label{def:mi}
    Mutual information between r.v. $X$ and $Y$ is defined as
    \[
        \mutualinfo{X}{Y} = \sup_{\mathcal E, \mathcal F} \sum_{i, j} P_{XY}(E_i\times F_j) \log \frac{P_{XY}(E_i\times F_j) }{ P_X(E_i) P_Y(F_j) },
    \]
    where the supremum is taken over all finite partitions $\mathcal E = \{E_1, E_2, \dotsc\}$ of $\mathcal X$ and $\mathcal F = \{F_1, F_2, \dotsc\}$ of $\mathcal Y$ with the usual conventions $0 \log 0 = 0$ and $0\log 0/0 =0$.
\end{definition}

\begin{remark}
    \label{rem:absolutely-continuous}
    If $P_{XY}$ is absolutely continuous with respect to $P_X \otimes P_Y$ (e.g., $X$ and $Y$ are finite r.v.), mutual information is equal to the following Kullback--Leibler divergence:
    \[
        \mutualinfo{X}{Y} = \KL{ P_{XY}  }{ P_X\otimes P_Y } = \int \log f \, \mathrm{d} P_{XY},
    \]
    where $f$ is the Radon--Nikodym derivative $f=\mathrm{d}P_{XY} / \mathrm{d}(P_X\otimes P_Y)$.
    If the absolute continuity does not hold, then $\mutualinfo{X}{Y} = +\infty$~\citep[Theorem 2.1.2]{pinsker1964information}.
\end{remark}

\invariancemiinjective*

\begin{proof}
    First, note that a continuous mapping between Borel spaces is measurable, so random variables $f(X)$ and $g(Y)$ are well-defined.

    Then, observe that it suffices to prove that 
    \[
        \mutualinfo{f(X)}{Y} = \mutualinfo{X}{Y}.
    \]
    It is well-known that we have $\mutualinfo{f(X)}{Y}\le \mutualinfo{X}{Y}$ for every measurable mapping $f$, so we will need to prove the reverse inequality.

    Recall form Definition~\ref{def:mi} that
    \[
        \mutualinfo{X}{Y} = \sup_{\mathcal E, \mathcal F} \sum_{i, j} P_{XY}(E_i\times F_j) \log \frac{P_{XY}(E_i\times F_j) }{ P_X(E_i) P_Y(F_j) },
    \]
    where the supremum is taken over all finite partitions $\mathcal E$ of $\mathcal X$ and $\mathcal F$ of $\mathcal Y$.
    
    The proof will go as follows: for every finite partition $\mathcal E$ of $\mathcal X$ we will construct a finite partition of $\mathcal X'$, which (together with the same arbitrary partition $\mathcal F$ of $\mathcal Y$) will give the same value of the sum.
    This will prove that $\mutualinfo{f(X)}{Y}\ge\mutualinfo{X}{Y}$.

    Take any finite partitions $\{E_1, \dotsc, E_n\}$ of $\mathcal X$ and $\{F_1, \dotsc, F_m\}$ of $\mathcal Y$.
    First, we will show that sets
    \begin{align*}
       G_0 &= \mathcal X'\setminus f(\mathcal X)\\
       G_1 &= f(E_1)\\
       G_2 &= f(E_2)\\
        &~~\vdots
    \end{align*}
    form a partition of $\mathcal X'$.

    It is easy to see that they are pairwise disjoint as $f$ is injective and it is obvious that their union is the whole $\mathcal X'$.
    Hence, we only need to show that they are all Borel. 
    
    For $i\ge 1$ the Lusin--Suslin theorem \citep[Theorem 15.1]{kechris2012classicaldescriptive} guarantees that $G_i$ is Borel (continuous injective mappings between Polish spaces map Borel sets to Borel sets).
    Similarly, $f(\mathcal X)$ is Borel and hence $G_0 = \mathcal X'\setminus f(\mathcal X)$ is Borel as well.

    Next, we will prove that
    \[
        \sum_{i = 1}^n \sum_{j=1}^m P_{XY}(E_i\times F_j) \log \frac{P_{XY}(E_i\times F_j) }{ P_X(E_i) P_Y(F_j) } =
        \sum_{i = 0}^n \sum_{j=1}^m P_{f(X)Y}(G_i\times F_j) \log \frac{P_{f(X)Y}(G_i\times F_j) }{ P_{f(X)}(G_i) P_Y(F_j)}. 
    \]

    Note that $P_{f(X)}(G_0) = P_X(f^{-1}(G_0)) = P_X(\varnothing) = 0$, so that we can ignore the terms with $i=0$.

    For $i\ge 1$ we have
    \[
        P_{f(X)}(G_i) = P_X( f^{-1}(G_i) ) = P_X( f^{-1}(f( 
E_i)) ) = P_X(E_i)
    \]
    as $f$ is injective.

    Similarly, $f\times 1_{\mathcal Y}$ is injective and maps $E_i\times F_j$ onto $G_i\times F_j$, so
    \[
        P_{f(X)Y}( G_i\times F_j ) = P_{XY}\big( (f\times 1_{\mathcal Y})^{-1}( G_i\times F_j ) \big) = P_{XY}(E_i\times F_j).
    \]
    This finishes the proof.
\end{proof}

\section{Spiral Diffeomorphisms}
\label{appendix:spirals}

In this section we will formally define the spiral diffeomorphisms and prove that they preserve the centred Gaussian distribution $\mathcal N(0, \sigma^2I_n)$.
They were inspired by the ``swirling construction'', discussed by \citet{hauberg2019bayes}.

\begin{definition}
    Let $O(n)$ be the orthogonal group and $\gamma \colon \mathbb R \to O(n)$ be any smooth function. The mapping
\[f\colon \mathbb R^n\to \mathbb R^n\]
given by
$f(x) = \gamma(||x||^2) x$ will be called the spiral.
\end{definition}

\begin{example}
    Let $\mathfrak{so}(n)$ be the vector space of all skew-symmetric matrices. For every two matrices $R \in O(n)$ and  $A\in \mathfrak{so}(n)$ and any smooth mapping $g\colon \mathbb R\to \mathbb R$ we can form a path
    \[
        \gamma(t) = R\exp(g(t) A). 
    \]

    In particular, $R=I_n$ and $g(t) = vt$ yields the spirals defined earlier.
\end{example}

From the above definition it is easy to prove that spirals are diffeomorphisms. We formalize it by the following lemma.

\begin{lemma}
    Every spiral is a diffeomorphism.
\end{lemma}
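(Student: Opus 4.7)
The plan is to show that $f$ is a smooth bijection with a smooth inverse by exploiting the single key observation that $f$ preserves the norm. Since $\gamma(\lVert x\rVert^2) \in O(n)$, we have
\[
    \lVert f(x)\rVert^2 = \lVert \gamma(\lVert x\rVert^2) x\rVert^2 = \lVert x\rVert^2,
\]
so $f$ maps each origin-centred sphere of radius $r$ into itself by the orthogonal transformation $\gamma(r^2)$. This is the structural fact that makes inversion possible.

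First I would check smoothness of $f$: the map $x\mapsto \lVert x\rVert^2$ is smooth on $\mathbb{R}^n$, $\gamma$ is smooth into $O(n)\subseteq \mathrm{Mat}_{n\times n}(\mathbb{R})$ by hypothesis, and matrix–vector multiplication is smooth (bilinear), so $f$ is smooth as a composition. Second, I would construct the candidate inverse
\[
    g(y) = \gamma(\lVert y\rVert^2)^{\!\top} y.
\]
By exactly the same reasoning (with transposition being a smooth linear operation on matrices), $g$ is smooth.

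Third, I would verify $g\circ f = \mathrm{id}$ and $f\circ g = \mathrm{id}$. For any $x$, the norm-preservation gives $\lVert f(x)\rVert^2 = \lVert x\rVert^2$, hence
\[
    g(f(x)) = \gamma(\lVert f(x)\rVert^2)^{\!\top} \gamma(\lVert x\rVert^2) x = \gamma(\lVert x\rVert^2)^{\!\top}\gamma(\lVert x\rVert^2) x = x,
\]
using that $\gamma(\lVert x\rVert^2)$ is orthogonal. The other composition is entirely analogous, since $g$ also preserves norms (its matrix at $y$ is orthogonal). This yields bijectivity and a smooth two-sided inverse, so $f$ is a diffeomorphism.

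There is essentially no hard step here; the only subtlety is recognising that the nonlinear scalar argument $\lVert x\rVert^2$ can still be recovered from $y = f(x)$ precisely because $\gamma$ is valued in $O(n)$. Once that is observed, the formula for $g$ writes itself and smoothness is automatic from the smoothness of $\gamma$ and of the standard operations on $\mathrm{Mat}_{n\times n}(\mathbb{R})$.
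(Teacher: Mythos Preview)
Your proof is correct and follows essentially the same approach as the paper: both observe that $\lVert f(x)\rVert^2 = \lVert x\rVert^2$, then write down the explicit smooth inverse $x\mapsto \gamma(\lVert x\rVert^2)^{-1}x$ (you use the transpose, which coincides with the inverse for orthogonal matrices) and verify the two compositions. The only minor difference is that you spell out the smoothness of $f$ itself, which the paper leaves implicit.
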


\begin{proof}
    Let $\gamma$ be the path leading to the spiral $f$. As matrix inverse is a smooth function, we can define a smooth function
    $h(x) = \gamma\left(||x||^2\right)^{-1} x$.

    Note that $||f(x)||^2 = ||x||^2$, as orthogonal transformation $\gamma\left(||x||^2\right)$ preserves distances.

    Then
    \begin{align*}
        h(f(x)) &= \gamma\left(||f(x)||^2\right)^{-1} f(x)\\
        &= \gamma\left(||x||^2\right)^{-1} f(x)\\
        &= \gamma\left(||x||^2\right)^{-1} \gamma\left(||x||^2\right) x\\
        &= x.
    \end{align*}
    Analogously, we prove that $f\circ h$ also is the identity mapping. 
\end{proof}

Finally, we will give the proof that spirals leave the centred normal distribution $\mathcal N(0, \sigma^2I_n)$ invariant.
This will follow from a slightly stronger theorem:

\begin{theorem}
    Let $P$ be a probability measure on $\mathbb R^n$ with a smooth probability density function $p$ which factorizes as:
    \[
        p(x) = u\left(||x||^2\right)
    \]
    for some smooth function $u\colon \mathbb R\to \mathbb R$. % and every point $x\in \mathbb R^n$.
    
    Then, every spiral $f\colon \mathbb R^n\to \mathbb R^n$ preserves the measure $P$: 
    \[
        f_{\#}P = P,
    \]
    where $f_{\#}P=P\circ f^{-1}$ is the pushforward measure.

    In particular, for $P=\mathcal N(0, \sigma^2I_n)$ (a centered isotropic Gaussian distribution), if $X\sim \mathcal N(0, \sigma^2I_n)$,
    then $f(X) \sim \mathcal N(0, \sigma^2I_n)$.
\end{theorem}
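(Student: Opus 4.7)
The plan is to use the change-of-variables formula together with two structural facts about spirals: they preserve norms, and they are volume-preserving. Invariance of $P$ will then follow because the density $p$ depends on $x$ only through $\|x\|^2$.

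First I would record the norm-preservation: since $\gamma(\|x\|^2)\in O(n)$, we have $\|f(x)\|=\|x\|$, and the same holds for the smooth inverse $h(y)=\gamma(\|y\|^2)^{-1}y$ exhibited in the preceding lemma. Consequently $p(f^{-1}(y)) = u(\|f^{-1}(y)\|^2) = u(\|y\|^2) = p(y)$.

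Next I would show $|\det Df(x)|=1$, which is the technical heart of the argument. Writing $R=\gamma(\|x\|^2)\in O(n)$ and $R'=\gamma'(\|x\|^2)$, the product rule gives
\[
Df(x)\,h = R h + 2\langle x,h\rangle R' x = \bigl(R + 2 R' x x^{\!\top}\bigr) h,
\]
so $Df(x) = R\bigl(I + 2 R^{\!\top} R' x x^{\!\top}\bigr)$. Differentiating the identity $\gamma(t)^{\!\top}\gamma(t)=I$ in $t$ shows that $S:=R^{\!\top}R'$ is skew-symmetric. The matrix determinant lemma $\det(I+uv^{\!\top})=1+v^{\!\top}u$ applied with $u=2Sx$ and $v=x$ then yields
\[
\det\bigl(I + 2(Sx)\,x^{\!\top}\bigr) = 1 + 2\,x^{\!\top} S x = 1,
\]
since skew-symmetry forces $x^{\!\top}Sx=0$. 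Because $|\det R|=1$, this gives $|\det Df(x)|=1$ on all of $\mathbb R^n$.

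With these two facts in hand, the conclusion is immediate from change of variables: for any Borel $A\subseteq\mathbb R^n$,
\[
(f_{\#}P)(A) = \int_{f^{-1}(A)} p(x)\,\mathrm dx = \int_A p(f^{-1}(y))\,\bigl|\det Df^{-1}(y)\bigr|\,\mathrm dy = \int_A p(y)\,\mathrm dy = P(A).
\]
The isotropic Gaussian case follows by taking $u(r)=(2\pi\sigma^2)^{-n/2}\exp(-r/(2\sigma^2))$, which has the required radial form, so $f(X)\sim\mathcal N(0,\sigma^2 I_n)$ whenever $X\sim\mathcal N(0,\sigma^2 I_n)$.

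The main obstacle is the Jacobian computation; everything else is formal. The key trick, which might not be obvious at first glance, is to factor $Df(x)$ as an orthogonal matrix times a rank-one perturbation of the identity, so that the matrix determinant lemma reduces the problem to the skew-symmetry of $\gamma^{\!\top}\gamma'$ that comes for free from $\gamma$ taking values in $O(n)$.
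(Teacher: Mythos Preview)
Your proof is correct and follows the same overall strategy as the paper: norm preservation gives $p(f^{-1}(y))=p(y)$, and the change-of-variables formula reduces everything to showing $|\det Df(x)|=1$. The only genuine difference is in how that determinant is computed. The paper also factors $Df(x)=R(I+2Sxx^{\top})$ with $S=R^{\top}R'$ skew-symmetric, but then chooses an orthonormal basis with $e_1=x/\|x\|$ and computes the matrix entry by entry, observing it is lower-triangular with $1$'s on the diagonal. Your use of the matrix determinant lemma, $\det(I+uv^{\top})=1+v^{\top}u$, together with $x^{\top}Sx=0$, is more direct and avoids the basis change entirely; it also makes transparent exactly which algebraic fact (skew-symmetry of $\gamma^{\top}\gamma'$, obtained by differentiating $\gamma^{\top}\gamma=I$) is doing the work. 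The paper's coordinate argument is slightly more elementary in that it does not invoke the determinant lemma, but yours is shorter and arguably more illuminating.
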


\begin{proof}
    As we have already observed, for every $x\in \mathbb R^n$ it holds that $||f(x)||^2 = ||x||^2$, so the PDF evaluated at both points is equal
    \[
        p(x) = p(f(x))
    \]

    Hence, we need to prove that for every point $x\in \mathbb R^n$, $|\det f'(x)| = 1$, where $f'(x)$ is the Jacobian matrix of $f$ evaluated at the point $x$.

    The proof of this identity was given by \citet{Carapetis-Rotating_spherical_shells}, but for the sake of completeness we supply a variant of it below.

    First, we calculate the Jacobian matrix:
    \begin{align*}
        \frac{\partial{f_i(x)}}{\partial x_j} &= \frac{\partial}{\partial x_j}\sum_k \gamma(||x||^2)_{ik} x_k \\
        &= \sum_k \gamma(||x||^2)_{ik} \frac{\partial x_k}{\partial x_j} + \sum_k x_k \frac{\partial \gamma(||x||^2)_{ik}}{\partial x_j}\\
        &= \gamma(||x||^2)_{ij} + \sum_k x_k \cdot 2x_j \cdot \gamma'(||x||^2)_{ik}\\
        &= \gamma(||x||^2)_{ij} + 2 \left(\sum_k  \gamma'(||x||^2)_{ik} x_k \right) x_j
    \end{align*}
    Hence,
    \[
        f'(x) = \gamma(||x||^2) + 2 \gamma'(||x||^2)x x^T.
    \]

    Now fix point $x$ and consider the matrix $\gamma(||x||^2)\in O(n)$.
    Define a mapping $y\mapsto \gamma(||x||^2)^{-1} f(y)$.
    As we have already observed, multiplication by $\gamma(||x||^2)^{-1}$ does not change the value of the PDF.
    It also cannot change the $|\det f'(x)|$ value as because $\gamma(||x||^2)^{-1}$ is a linear mapping, its derivative is just the mapping itself and because $\gamma(||x||^2)^{-1}$ is an orthogonal matrix, we have $|\det \gamma(||x||^2)^{-1}| = 1$.
    Hence, without loss of generality we can assume that $\gamma(||x||^2) = I_n$ and
    \[
        f'(x) = I_n + 2 A xx^T,
    \]
    where $A := \gamma'(||x||^2) \in \mathfrak{so}(n)$ is a skew-symmetric matrix.
    
    If $x=0$ we have $|\det f'(x)| = |\det I_n| = 1$.
    In the other case, we will work in coordinates induced by an orthonormal basis $e_1, \dotsc, e_n$ in which the first basis vector is $e_1 = x/||x||$.
    Hence,
    \[
        f'(x) = I_n + 2||x||^2 A e_1e_1^T.
    \]
    For $i\ge 2$ we have $e_1^Te_i = 0$ and consequently 
    \[
        f'(x)e_i = e_i + 2||x||^2 Ae_1 e_1^T e_i = e_i.
    \]
    This shows that in this basis we have
    \[
        f'(x) =
        \begin{pmatrix}
            a_1 & 0 & 0 & \dotsc & 0\\
            a_2 & 1 & 0 & \dotsc & 0\\
            a_3 & 0 & 1 & \dotsc & 0\\
            \vdots &\vdots & \vdots & \ddots & 0\\
            a_n & 0 & 0 & \cdots & 1
        \end{pmatrix},
    \]
    where the block on the bottom right is the identity matrix $I_{n-1}$ and $\det f'(x) = a_1$.
    
    The first column is formed by the coordinates of
    \[
        f'(x)e_1 = e_1 + 2||x||^2 A e_1
    \]
    and we have
    \[
    a_1 = e_1^Tf'(x)e_1 = 1 + 2||x||^2 e_1^TA e_1 = 1 + 0 = 1
    \]
    as $A\in \mathfrak{so}(n)$ is skew-symmetric.
\end{proof}

\section{Overview of Estimators}
\label{appendix:estimators}

There exists a broad literature on (conditional) mutual information estimators ranging from estimators on discrete data~\citep{paninski:03:suboptimal,valiant:11:sub-linear,vinh:14:chisqcorrection,marx:19:sci}, over estimators for continuous data~\citep{kraskov:04:ksg,darbellay:99:adaptive-partitioning,belghazi:18:mine} to estimators that can consider discrete-continuous mixture data~\citep{gao:17:mixture,marx:21:myl,mesner:19:ms}.
Here, we focus on the most commonly used class, which also finds applications in representation learning: estimators for continuous data.
For continuous data, the most widely propagated classes of estimators are histogram or partition-based estimators, nearest neighbor-based density estimators, kernel methods, and neural estimators.
In the following, we briefly discuss a representative selection of estimators from these classes, as well as model-based estimation.

\paragraph{Histogram-Based Estimators}

A natural way to approximate mutual information, as stated in Definition~\ref{def:mi}, is by partitioning the domain space of $X$ and $Y$ into segments and estimate the densities locally.
This approach is also known as histogram density estimation, where the probability mass function is approximated locally for each hyper-rectangle $E_i$ by $\hat{P}_X(E_i) = \frac{\nicefrac{n_i}{N}}{\text{Vol}_i}$, where $\text{Vol}_i$ is the volume of $E_i$, and $n_i$ refers to the number of data points residing in $E_i$.

To estimate mutual information, we explore the space of allowed partitionings and select the partitioning that maximizes mutual information~\citep{darbellay:99:adaptive-partitioning,cellucci:05:early-hist}
\[
E_{\text{Disc},N}^{XY} = \sup_{\mathcal E, \mathcal F} \sum_{i, j} \hat{P}_{XY}(E_i\times F_j) \log \frac{\hat{P}_{XY}(E_i\times F_j) }{ \hat{P}_X(E_i) \hat{P}_Y(F_j) } \; .
\]

Many estimators follow this idea. \citet{darbellay:99:adaptive-partitioning} use adaptive partitioning trees and propose to split bins until the density within a bin is uniform, which they check with the $\chi^2$ test. \citet{daub:04:b-splines} use B-Splines to assign data points to bins, and other authors use regularization based on the degrees of freedom of partitioning to avoid overfitting~\citep{suzuki:16:jic,cabeli:20:mixedsc,vinh:14:chisqcorrection,marx:21:myl}. 

A well-known property of histogram-based estimators is that they are consistent for $N \to \infty$ if the volume of the largest hypercube goes to zero, while the number of hypercubes grows sub-linearly with the number of samples, $N$ (see e.g.~\citet{cover:06:elements}). For low-dimensional settings, histogram density estimation can provide accurate results, but the strategy suffers from the curse of dimensionality~\citep{pizer1987adaptive,klemela2009multivariate}.

In our evaluation, we resort to partitioning estimators based on a fixed number of grids ($10$ equal-width bins per dimension), since more recent partitioning-based estimators focus on estimating conditional MI and only allow $X$ and $Y$ to be one-dimensional~\citep{suzuki:16:jic,marx:21:myl,cabeli:20:mixedsc}, or do not have a publicly available implementation~\citep{darbellay:99:adaptive-partitioning}.

\paragraph{\textit{k}NN Estimators}

An alternative to histogram-based methods is to use $k$-nearest neighbor ($k$NN) distances to approximate densities locally. Most generally, $k$NN estimators express mutual information as the sum
\(
H(X) + H(Y) - H(X,Y),
\)
which can be done under the assumptions discussed in Remark~\ref{rem:absolutely-continuous}, and estimate the individual entropy terms as
\[
\hat{H}((x_1, \dots, x_N)) = - \frac{1}{N} \sum_{i = 1}^N \log \frac{\nicefrac{k(x_i)}{N}}{\text{Vol}_i} \; ,
\]
where $k(x_i)$ is the number of data points other than $x_i$ in a neighborhood of $x_i$, and $\text{Vol}_i$ is the volume of the neighborhood~\citep{kozachenko:87:firstnn}. The quality of the estimator depends on two factors: i) the distance metric used to estimate the nearest neighbor distances, and ii) the estimation of the volume terms.

The most well-known estimator is the Kraskov, St{\"o}gbauer and Grassberger (KSG) estimator~\citep{kraskov:04:ksg}. In contrast to the first $k$NN-based estimators~\citep{kozachenko:87:firstnn}, \citet{kraskov:04:ksg} propose to estimate the distance to the $k$-th neighbor $\nicefrac{\rho_{i,k}}{2}$ for each data point $i$ only on the joint space of $\mathcal Z = \mathcal X \times \mathcal Y$, with
\[
D_Z(z_i,z_j) = \max \{ D_X(x_i, x_j), D_Y(y_i,y_j) \} \, ,
\]
where $D_X$ and $D_Y$ can refer to any norm-induced metric (e.g., Euclidean $\ell_2$, Manhattan $\ell_1$ or Chebyshev $\ell_\infty$).\!\footnote{In the original approach, the $L_{\infty}$ is used, and \citet{gao:18:demystifying} derives theoretical properties of the estimator for the $L_2$ norm.} To compute the entropy terms of the sub-spaces $\mathcal X$ and $\mathcal Y$, we count the number of data points with distance smaller than $\nicefrac{\rho_{i,k}}{2}$, i.e. we define $n_{x,i}$ as
\begin{equation}
\label{eq:nx}
n_{x,i} = \left| \left\{ x_j \, : \, D_X(x_i, x_j) <  \frac{\rho_{i,k}}{2}, \, i \neq j \right\} \right| \, ,
\end{equation}
and specify $n_{y,i}$ accordingly. As a consequence, all volume-related terms cancel and the KSG estimator reduces to
\begin{equation}
\label{eq:ksg-estimator}
E_{\text{KSG},N}^{XY} = \psi(k) {+} \psi(n) - \frac{1}{n} \sum_{i=1}^n \psi(n_{x,i}{+}1) {+} \psi(n_{y,i}{+}1) \, .
\end{equation}
The diagamma function\footnote{The digamma function is defined as $\psi(x) = \mathrm{d}\log \Gamma(x)/\mathrm{d}x$. It satisfies the recursion $\psi(x+1)=\psi(x) + \frac{1}{x}$ with $\psi(1) = - \gamma$, where $\gamma\approx 0.577$ is the Euler--Mascheroni constant.} $\psi$ was already introduced by~\citet{kozachenko:87:firstnn} to correct for biases induced by the volume estimation. For a detailed exposition of the issue, we refer to~\citet[Sec.~II.A]{kraskov:04:ksg}.

There exist various extensions of the classical KSG estimator. When selecting the $L_\infty$ norm as a distance measure for $D_X$ and $D_Y$, $E_{\text{KSG},N}^{XY}$ can be efficiently computed with the K-d trie method~\citep{vejmelka:07:mi-fast-computation}. Other approaches aim to improve the estimation quality by explicitly computing the volume terms via SVD~\citep{lord:18:gknn} or PCA~\citep{lu:20:gknn-boosting}, or use manifold learning~\citep{marx:22:geodesic} or normalizing flows~\citep{ao:22:entropy-flows} to estimate the nearest neighbors more efficiently in high-dimensional settings.

\paragraph{Kernel Estimators}

A smaller class of estimators builds upon kernel density estimation (KDE)~\citep{moon:95:kde,paninski:08:kernel}. Classical approaches compute the density functions for the individual entropy terms via kernel density estimation, which is well-justified under the assumption of $P_{XY}$ being absolutely continuous with respect to $P_X \otimes P_Y$ (cf.~Remark~\ref{rem:absolutely-continuous}).  i.e., the density for a $d$-dimensional random vector $X$ is approximated as
\[
\hat{p}(x) = \frac{1}{Nh^d} \sum_{i = 1}^N K \left(\frac{x - x_i}{h} \right) \, ,
\]
where $h$ is the smoothing parameter or bandwidth of a Gaussian kernel~\citep{silverman:18:density}.

\citet{moon:95:kde} propose to estimate mutual information using a Gaussian kernel, for which the bandwidth can be estimated optimally~\citep{silverman:18:density}. Similar to histogram-based approaches, KDE-based estimators with fixed bandwidth $h$ are consistent if $N h^d \to \infty$ as $h \to 0$. 

A more recent proposal, LNN~\citep{gao:17:lnn}, uses a data-dependent bandwidth based on nearest-neighbor distances, to avoid fixing $h$ globally. 

\paragraph{Neural Estimators} 

Neural network approaches such as MINE \citep{belghazi:18:mine} use a variational lower bound on mutual information using the Donsker--Varadhan theorem: let $f\colon \mathcal X\times \mathcal Y\to \mathbb R$ be a bounded measurable function and
\[
    V_f(X,Y) = \sup_{f} \mathbb E_{ P_{XY} }[f(X, Y)] - \log \mathbb E_{ P_X\otimes P_Y }[\exp f(X, Y)].
\]
Under sufficient regularity conditions on $P_{XY}$, $\mutualinfo{X}{Y}$ is equal to $V_f(X, Y)$, which is evaluated over all bounded measurable mappings $f$.
In practice, the right hand side is optimized with respect to the parameters of a given parametric predictive model $f$, as a neural network.
Approaches following this idea also count as discriminative approaches since the neural network is trained to distinguish points from the joint and marginal distribution.
The literature on variational lower bounds of mutual information is growing \citep{oord:18:infonce,song:20:understanding} and the idea of using a discriminative approach to estimate mutual information has also been explored by \citet{koeman:14:cmi-forests}, who used random forests.
For an excellent theoretical overview of different neural estimators we refer to \citet{poole2019variational}.

\paragraph{Model-based Estimators} Finally, if one assumes that $P_{XY}$ is a member of a family of distributions $Q_{XY}(\theta)$ with known mutual information, one can estimate $\theta$ basing on the data $((x_1, y_1), \dotsc, (x_N, y_N))$ and calculate the corresponding mutual information.
For example, if one assumes that there exist jointly multivariate normal variables $\tilde X = (\tilde X_1, \dotsc, \tilde X_r, E_1, \dotsc, E_{m-r})$ and $\tilde Y = (\tilde Y_1, \dotsc, \tilde Y_{r}, \tilde E_1, \dotsc, \tilde E_{n-r})$ such that the cross-covariance terms are zero apart from 
$\Cov(\tilde X_i, \tilde Y_i)$, and that $X=A\tilde X + a$ and $Y=B\tilde Y + b$ for invertible matrices $A$ and $B$ and vectors $a\in \mathbb R^m$ and $b\in \mathbb R^n$, then the mutual information can be found as
\[
    \mutualinfo{X}{Y} = -\sum_{i=1}^r \log \sqrt{1-\Cor(\tilde X_r, \tilde Y_r)^2}. 
\]
The model parameters $A$, $B$, $a$, and $b$, as well as the pairs of corresponding r.v. $\tilde X_r$ and $\tilde Y_r$, can be found by maximising likelihood using canonical correlation analysis (CCA) \citep[Ch. 28]{pml2Book}, which has been previously explored by \citet{kay-elliptic}. 

\section{Additional Information on Benchmark Tasks}
\label{appendix:additional-information-on-benchmark-tasks}

\subsection{Task Descriptions}\label{appendix:subsection:task-descriptions}

\paragraph{Bimodal Variables} Consider a r.v.\ $X$ distributed according to a Gaussian mixture model with CDF $F = w_1F_1 + w_2F_2$, where $F_1$ and $F_2$ are CDFs of two Gaussian distributions and $w_1$ and $w_2$ are positive weights constrained as $w_1 + w_2 = 1$.
The CDF is everywhere positive, continuous, and strictly increasing.
Hence, it has an inverse, the quantile function, $F^{-1}\colon (0, 1)\to \mathbb R$, which is continuous and injective as well (see~Lemma \ref{lemma:increasing-continuous-bijections-are-homeo}).
Thus, we can define r.v.~$X' = F^{-1}(X)$, where $X\sim \mathrm{Uniform}(0, 1)$ and $\mutualinfo{X}{Y} = \mutualinfo{X'}{Y}$ for every r.v.~$Y$.
We generated two CDFs of Gaussian mixtures and implemented a numerical inversion in JAX \citep{jax2018github} defining both $X'$ and $Y'$ to follow a bimodal distribution.

For the experimental values, we used
\begin{align*}
    F_X(x) &= 0.3 F(x) + 0.7 F(x-5)\\
    F_Y(y) &= 0.5 F(x+1) + 0.5 F(x-3),
\end{align*}
where $F$ is the CDF of the standard normal distribution $\mathcal N(0, 1)$.

\paragraph{Wiggly Mapping} As the ``wiggly mapping'' we used
\begin{align*}
    w_X(x) &= x + 0.4 \sin x + 0.2\sin(1.7x + 1) + 0.03 \sin(3.3x - 2.5)\\
    w_Y(y) &= y - 0.4\sin(0.4y) + 0.17\sin(1.3y + 3.5) + 0.02\sin(4.3 y - 2.5) 
\end{align*}
and we used visualisations to make sure that they do not heavily distort the distribution.

\paragraph{Additive Noise}
For the additive noise model of $Y=X+N$ where $X\sim \mathrm{Uniform}(0, 1)$ and $N\sim \mathrm{Uniform}(-\epsilon, \epsilon)$ it holds that
\[
\mutualinfo{X}{Y} = 
\begin{cases}
    \epsilon - \log (2\epsilon) & \text{if } \epsilon \le 1/2\\
    (4\epsilon)^{-1} & \text{otherwise}
\end{cases}
\]
implying that $\mutualinfo{X}{Y}=1.7$ for $\epsilon=0.1$ and $\mutualinfo{X}{Y}=0.3$ for $\epsilon=0.75$.

\paragraph{Swiss Roll Embedding}
Consider a bivariate distribution $P_{XY}$ with uniform margins $X\sim \text{Uniform}(0, 1)$ and $Y\sim \text{Uniform}(0, 1)$ and $\mutualinfo{X}{Y}=0.8$.
The Swiss roll mapping is an embedding $i\colon (0, 1)^2\to \mathbb R^3$ given by $i(x, y) = (e(x), y)$, where $e\colon (0, 1) \to \mathbb R^{2}$ is given by
\begin{equation*}
     e(x) = \frac{1}{21}(t(x) \cos t(x),
     t(x)\sin t(x), \qquad
     t(x) = \frac{3\pi}{2}(1+2x).
\end{equation*}
Note that $P_{X'Y'}$ does not have a PDF with respect to the Lebesgue measure on $\mathbb R^3$.
We visualise the Swiss roll distribution in Fig.~\ref{fig:distribution-visualisation}.

\paragraph{Spiral Diffeomorphism}
For the precise definition and properties of the general spiral diffeomorphism consult Appendix~\ref{appendix:spirals}.

In our benchmark we applied the spiral $x\mapsto \exp(v A ||x||^2)x$ to both $X$ and $Y$ variables.
We used an $m\times m$ skew-symmetric matrix $A_X$ with $(A_X)_{12} = -(A_X)_{21} = 1$ and an $n\times n$ skew-symmetric matrix $A_Y$ with $(A_Y)_{23} = -(A_Y)_{32} = 1$.
Each of these effectively ``mixes'' only two dimensions.

The speed parameters $v_X$ and $v_Y$ were chosen as $v_X = 1/m$ and $v_Y = 1/n$ to partially compensate for the fact that if $X\sim \mathcal N(0, I_m)$, then $||X||^2\sim \chi^2_m$ and has mean $m$.

\paragraph{Multivariate Student} Let $\Omega$ be an $(m+n)\times (m+n)$ positive definite dispersion matrix and $\nu$ be a positive integer (the degrees of freedom).
By sampling an $(m+n)$-dimensional random vector $(\tilde X, \tilde Y)\sim \mathcal N(0, \Omega)$ and a random scalar $U\sim \chi^2_{\nu}$ one can define rescaled variables $X=\tilde X \sqrt{\nu/U}$ and $Y=\tilde Y\sqrt{\nu/U}$, which are distributed according to the multivariate Student distribution.
For $\nu=1$ this reduces to the multivariate Cauchy distribution (and the first two moments are not defined), for $\nu=2$ the mean exists, but covariance does not, and for $\nu > 2$ both the mean and covariance exist, with
\(
    \mathrm{Cov}(X, Y)= \frac{\nu}{\nu-2}\Omega,
\)
so that the tail behaviour can be controlled by changing $\nu$. In particular, for $\nu \gg 1$ because of the concentration of measure phenomenon $U$ has most of its probability mass around $\nu$ and the variables $(X, Y)$ can be well approximated by $(\tilde X, \tilde Y)$.

\citet{skew-elliptical} proved that $\mutualinfo{X}{Y}$ can be computed via the sum of the mutual information of the Gaussian distributed basis variables and a correction term, i.e., 
\(
    \mutualinfo{X}{Y} = \mutualinfo{\tilde X}{\tilde Y} + c(\nu, m, n),
\)
where
\begin{align*}
    c(\nu, m, n) &= f\left(\nu\right) {+} f\left(\nu {+} m {+} n\right) {-} f\left(\nu {+} m\right) {-} f\left(\nu {+} n\right),& 
    f(x) &=  \log \Gamma\left(\frac{x}{2}\right) - \frac{x}{2}\psi\left(\frac{x}{2}\right),
\end{align*}
and $\psi$ is the digamma function.

Contrary to the Gaussian case, even for $\Omega=I_{m+n}$, the mutual information $\mutualinfo{X}{Y}=c(\nu, m, n)$ is non-zero, as $U$ provides some information about the magnitude.
In the benchmark we use this dispersion matrix to quantify how well the estimators can use the information contained in the tails, rather than focusing on estimating the Gaussian term.

\subsection{Covariance Matrix Family}
\label{section:covariance-matrix}
We generated jointly multivariate normal variables $X=(X_1, \dotsc, X_m)$ and $Y=(Y_1, \dotsc, Y_n)$ with the following procedure.

Consider i.i.d. Gaussian variables

\[U_\mathrm{all}, U_X, U_Y, Z_1, \dotsc, Z_K, E_1\dotsc, E_m, F_1, \dotsc, F_n, E'_{m-K}, \dotsc, E'_m, F'_{n-K}, \dotsc, F'_n \sim \mathcal{N}(0, 1).\]

Now let $\epsilon_X, \epsilon_Y, \eta_X, \eta_Y, \alpha, \beta_X, \beta_Y, \lambda \in \mathbb R$ be hyperparameters and for $l = 1, 2, \dotsc, K$ define
\[X_l = \epsilon_X E_l + \alpha U_\mathrm{all} + \beta_X U_X + \lambda Z_l,\quad Y_l =  \epsilon_Y F_l + \alpha U_\mathrm{all} + \beta_Y U_Y + \lambda Z_l,\]
for $l = K+1, \dotsc, m$ define
\[X_l = \epsilon_X E_l + \alpha U_\mathrm{all} + \beta_X U_X + \eta_X E'_l\]
and for $l = K+1, \dotsc, n$ define
\[
Y_l = \epsilon_Y F_l + \alpha U_\mathrm{all} + \beta_Y U_Y + \eta_Y F'_l.
\]

The interpretation of the hyperparameters is the following:
\begin{enumerate}
    \item $\alpha$ contributes to the correlations between all the possible pairs of variables.
    \item $\beta_X$ contributes to the correlations between the $X$ variables.
    \item $\beta_Y$ contributes to the correlations between the $Y$ variables.
    \item $\lambda$ gives additional ``interaction strength" between pairs of variables $X_l$ and $Y_l$ for $l = 1, \dotsc, K$.
    \item $\epsilon_X$ and $\epsilon_Y$ control part of the variance non-shared with any other variable.
    \item $\eta_X$ and $\eta_Y$ can be used to increase the variance of $X_l$ and $Y_l$ for $l > K$ to match the variance of variables $l\le K$ due to the $\lambda$.
\end{enumerate}
  
Every term of the covariance matrix is easy to calculate analytically:
\[
\Cov(X_i, X_j) = \alpha^2 + \beta_X^2 + \mathbf{1}[i=j] \big(\epsilon_X^2 +  \lambda^2 \mathbf{1}[i \le K] + \eta_X^2 \mathbf{1}[i>K] \big)
\]

\[
\Cov(Y_i, Y_j) = \alpha^2 + \beta_Y^2 + \mathbf{1}[i=j] \big(\epsilon_Y^2 + \lambda^2 \mathbf{1}[i \le K] + \eta_Y^2 \mathbf{1}[i>K] \big)
\]

$$\Cov(X_i, Y_j) = \alpha^2 + \lambda^2 \mathbf{1}[i=j] \mathbf{1}[i\le K]$$

In the following, we analyse some special cases.

\paragraph{Dense Interactions}

Consider $\lambda = \eta_X = \eta_Y = \beta_X = \beta_Y = 0$ and write $\epsilon := \epsilon_X = \epsilon_Y$. We have
$$\mathrm{Cov}(X_i, X_j) = \mathrm{Cov}(Y_i, Y_j) = \alpha^2 + \epsilon^2 \mathbf{1}[i=j]$$
$$\mathrm{Cov}(Y_i, Y_j) = \alpha^2$$

Hence, between every pair of distinct variables we have the same correlation $\frac{\alpha^2}{\alpha^2 + \epsilon^2}$.

\paragraph{Sparse Interactions}

Consider $\alpha=0$, $\epsilon := \epsilon_X = \epsilon_Y$, $\eta_X = \eta_Y = \lambda$ and $\beta_X = \beta_Y = 0$

$$\Cov(X_i, X_j) = \mathrm{Cov}(Y_i, Y_j) = \mathbf{1}[i=j] (\epsilon^2 + \lambda^2) $$

$$\Cov(X_i, Y_j) = \lambda^2 \mathbf{1}[i=j] \mathbf{1}[i\le K]$$

All the variables have the same variance, but the covariance structure is now different. Between (distinct) $X_i$ and $X_j$ the correlation is zero and similarly for (distinct) $Y_i$ and $Y_j$.

Correlations between $X_i$ and $Y_j$ are zero unless $i=j\le K$, when $\Cor(X_i, Y_i)=\lambda^2 / (\epsilon^2 + \lambda^2)$.

\paragraph{Interpolation}

\begin{figure*}[t]
    \centering
    \includegraphics[width=0.8\textwidth]{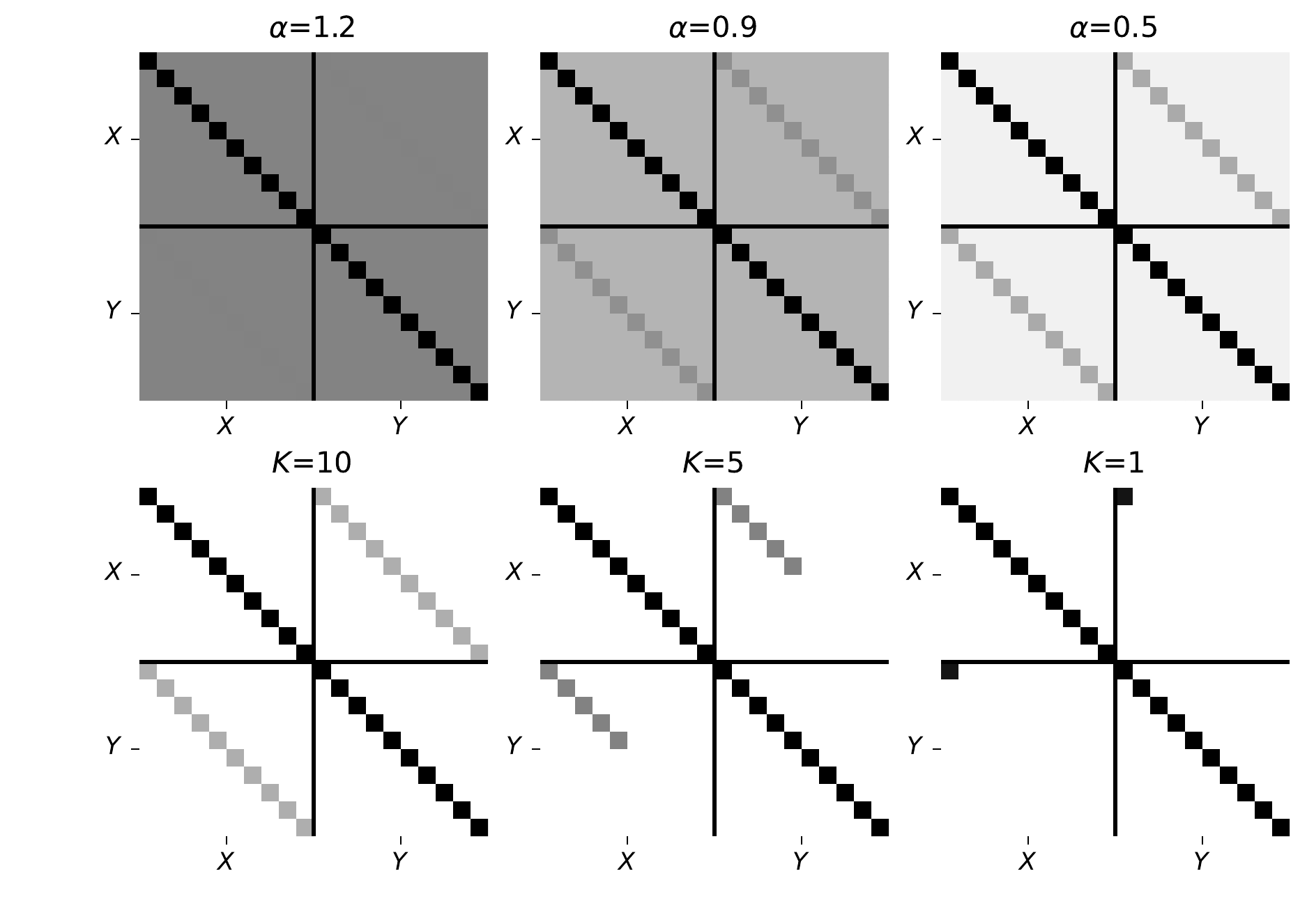}
    \vspace{-2em}
    \caption{Interpolation between dense and sparse covariance matrices.}
    \label{fig:sparsity-matrix-interpolation}
\end{figure*}

In Section~\ref{sec:selected-challenges} we discussed a two-stage interpolation from dense matrices to sparse matrices: in the first stage $\alpha$ is decreased increasing $\lambda$ at the same time to keep the mutual information constant.
In the second stage we decrease the number of interacting variables $K$ increasing $\lambda$.
The selected covariance matrices from this two-stage process are visualised in Fig.~\ref{fig:sparsity-matrix-interpolation}.

\subsection{Technical Results}
In this subsection we add the technical results, related to the copula theory \citep{nelsen2007copulas} and uniformization of random variables. 

\begin{lemma}\label{lemma:increasing-continuous-bijections-are-homeo}
    Let $X$ be a real-valued r.v.\ with a strictly increasing, positive everywhere, and continuous CDF $F$. Then
    \[
        F\colon \mathbb R\to (0, 1)
    \]
    is a homeomorphism.
\end{lemma}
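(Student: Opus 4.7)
The plan is to verify the three defining properties of a homeomorphism in turn: that $F$ maps $\mathbb R$ bijectively onto $(0,1)$, that it is continuous (which is given), and that its inverse is continuous. Continuity being assumed, the real work is the bijection onto the open interval and the continuity of $F^{-1}$.

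For injectivity I would simply invoke the hypothesis that $F$ is strictly increasing. For surjectivity onto $(0,1)$ I first argue that the range of $F$ is contained in $(0,1)$: positivity everywhere is given, and if some $F(x_0) = 1$, then by strict monotonicity $F(x) > 1$ for any $x > x_0$, a contradiction, so $F(x) < 1$ for all $x$. Next I use the standard CDF limits $\lim_{x\to -\infty} F(x) = 0$ and $\lim_{x\to +\infty} F(x) = 1$ (properties of any distribution function) to conclude that $F$ takes values arbitrarily close to $0$ and to $1$. The Intermediate Value Theorem, applied on a sufficiently large closed interval, then gives that every $t \in (0,1)$ is attained.

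For the continuity of $F^{-1}\colon (0,1) \to \mathbb R$, I would rely on the general fact that a strictly monotone bijection between real intervals automatically has a strictly monotone continuous inverse. Concretely, fix $t_0 \in (0,1)$ and let $x_0 = F^{-1}(t_0)$; given $\varepsilon > 0$, the points $F(x_0 - \varepsilon)$ and $F(x_0 + \varepsilon)$ bracket $t_0$ strictly (by strict monotonicity), so choosing $\delta$ smaller than $\min(t_0 - F(x_0-\varepsilon),\, F(x_0+\varepsilon) - t_0)$ forces $F^{-1}((t_0 - \delta, t_0 + \delta)) \subseteq (x_0 - \varepsilon, x_0 + \varepsilon)$. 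This is the standard $\varepsilon$-$\delta$ argument for inverses of strictly monotone continuous functions.

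I do not anticipate any serious obstacle: every step uses only elementary real analysis and the given hypotheses on $F$. The only subtlety worth being explicit about is ruling out the endpoints $0$ and $1$ from the image, which is where strict monotonicity (not merely monotonicity) is essential; without it $F$ could be constantly $1$ on a ray and fail to be a bijection onto the open interval.
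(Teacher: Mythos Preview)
Your proof is correct and follows essentially the same approach as the paper: both establish that $F$ is a bijection onto $(0,1)$ by excluding the endpoints via positivity and strict monotonicity and then invoking the CDF limits together with the intermediate value theorem. The only cosmetic difference is that the paper phrases the continuity of the inverse via the open-map property, showing $F((a,b)) = (F(a),F(b))$, whereas you unwind the same fact into an explicit $\varepsilon$--$\delta$ argument; these are equivalent.
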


\begin{proof}
    As $F$ is a CDF we have $F(\mathbb R) \subseteq [0, 1]$. Note that $0$ is excluded from the image as $F$ is positive everywhere and $1$ is excluded as $F$ is strictly increasing. Hence, $F(\mathbb R) \subseteq (0, 1)$. Using continuity, strict monotonicity, and
    \[
        \lim_{x\to -\infty} F(x) = 0, \quad \lim_{x\to +\infty} F(x) = 1
    \]
    we note that $F$ is a bijection onto $(0, 1)$. Analogously, we prove that
    \[
        F((a, b)) = (F(a), F(b))
    \]
    for all $a < b$, so $F$ is an open map. As a continuous bijective open map $F$ is a homeomorphism.
\end{proof}

\begin{lemma}\label{lemma:pushforward-via-cdf-is-uniform}
    Let $X$ be a real-valued r.v.\ with a CDF $F$ which is continuous, strictly increasing and positive everywhere. Then, $F(X)$ is uniformly distributed on $(0, 1)$.
\end{lemma}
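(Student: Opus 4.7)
The plan is to directly compute the CDF of the r.v.\ $Y = F(X)$ and verify it equals the CDF of the uniform distribution on $(0,1)$, which characterizes the distribution uniquely.

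First, I would invoke Lemma \ref{lemma:increasing-continuous-bijections-are-homeo} to obtain that $F\colon \mathbb{R} \to (0,1)$ is a homeomorphism. In particular, $F$ is a strictly increasing bijection onto $(0,1)$, so its inverse $F^{-1}\colon (0,1) \to \mathbb{R}$ exists and is itself strictly increasing. This means that for any $u \in (0,1)$ and any $x \in \mathbb{R}$, the inequality $F(x) \le u$ is equivalent to $x \le F^{-1}(u)$.

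Next, for $u \in (0,1)$, I would write
\[
    P(F(X) \le u) = P(X \le F^{-1}(u)) = F(F^{-1}(u)) = u,
\]
using the equivalence above for the first equality and the definition of the CDF of $X$ for the second. For $u \le 0$, clearly $P(F(X) \le u) = 0$ since $F$ takes values in $(0,1)$, and for $u \ge 1$, clearly $P(F(X) \le u) = 1$ for the same reason. Thus the CDF of $F(X)$ coincides with the CDF of the uniform distribution on $(0,1)$, which completes the proof.

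There is no real obstacle here: the only subtle point is ensuring that the preimage manipulation $\{F(X) \le u\} = \{X \le F^{-1}(u)\}$ is justified, which follows immediately from the strict monotonicity and bijectivity of $F$ onto $(0,1)$ established in the previous lemma. This is the classical probability integral transform, and the positivity and strict monotonicity assumptions on $F$ are precisely what guarantees the inverse is well-defined on all of $(0,1)$.
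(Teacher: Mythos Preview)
Your proposal is correct and takes essentially the same approach as the paper: invoke the previous lemma to obtain that $F$ is a homeomorphism onto $(0,1)$, then compute $P(F(X)\le u)=P(X\le F^{-1}(u))=F(F^{-1}(u))=u$ for $u\in(0,1)$. Your version is slightly more thorough in explicitly handling the cases $u\le 0$ and $u\ge 1$ and in justifying the preimage identity, but the argument is the same.
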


\begin{proof}
    Our assumptions on $F$ imply that it is a continuous homeomorphism $\mathbb R\to (0, 1)$.
    
    Let now $y\in (0, 1)$. We have
    \[
        P( F(X) \le y ) = P( X\le F^{-1}(y) ) = F\left( F^{-1}(y) \right) = y.
    \]
\end{proof}

\section{Additional Experiments}
\label{sec:additional-experiments}

\subsection{Pointwise Mutual Information}

To understand why neural estimators are not able to estimate the mutual information contained in the Student distribution, we decided to qualitatively study the learned critic.

In the NWJ approach~\citep{NWJ2007}, for $N\to \infty$ data points the optimal critic $f(x, y)$ is given by
\[
    f(x, y) = 1 + \mathrm{PMI}(x, y),
\]
where
\[
    \mathrm{PMI}(x, y) = \log \frac{p_{XY}(x, y)}{p_X(x)p_Y(y)}
\]
is the pointwise mutual information~\citep[Ch.~2]{pinsker1964information} and $p_{XY}$, $p_X$ and $p_Y$ are the PDFs of measures $P_{XY}$, $P_X$ and $P_Y$, respectively.

We trained NWJ estimator with the default hyperparameters on $N=5\,000$ data points and visualised the critic predictions (with $1$ subtracted), true PMI, and $500$ samples from the distribution in Fig.~\ref{fig:pointwise-mi}.
The data is drawn from a bivariate Student distribution with dispersion matrix
\[
    \Omega = \begin{pmatrix}
        1 & 0.5\\
        0.5 & 1
    \end{pmatrix}
\]
and varying degrees of freedom $\nu\in \{1, 2, 3, 5, 8\}$.
Additionally, we repeated the experiment for the bivariate normal distribution with covariance matrix given by the dispersion, which corresponds to the limit $\nu\to \infty$.

\begin{figure*}[t]
    \centering
    \includegraphics[width=\textwidth]{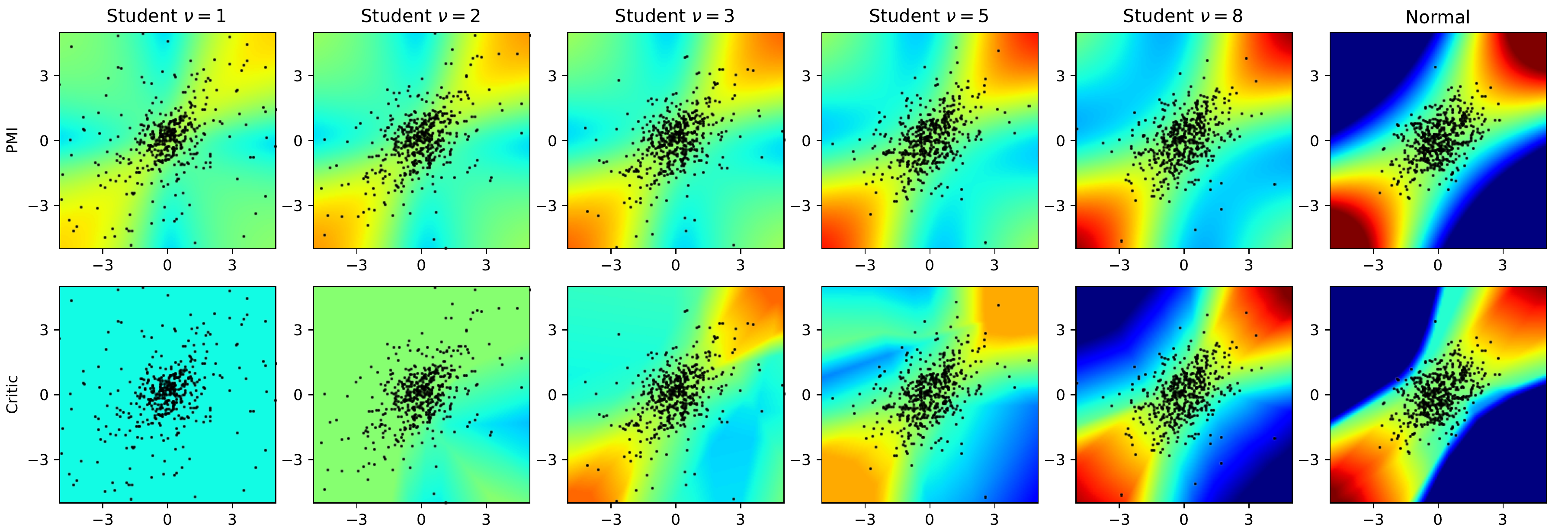}
    \vspace{-2em}
    \caption{Qualitative comparison between ground-truth pointwise MI and learned critic for bivariate Student and normal distributions.}
    \label{fig:pointwise-mi}
\end{figure*}

Qualitatively we see that the critic does not learn the correct PMI for $\nu\in \{1, 2\}$, with better similarities for $\nu\in \{3, 5, 8\}$.
For the normal distribution, the critic qualitatively approximates well the PMI function in the regions where the data points are available.

We hypothesise that these problems can be partly atributed to the longer tails of Student distributions (as the critic may be unable to extrapolate to the regions where few points are available) and partly to the analytical form of the PMI function --- for the considered bivariate normal distribution the PMI is given by a simple quadratic function, which is not the case for the bivariate Student distribution.

\subsection{Sample Size Requirements}

\begin{figure*}[t]
    \centering
    \includegraphics[width=\textwidth]{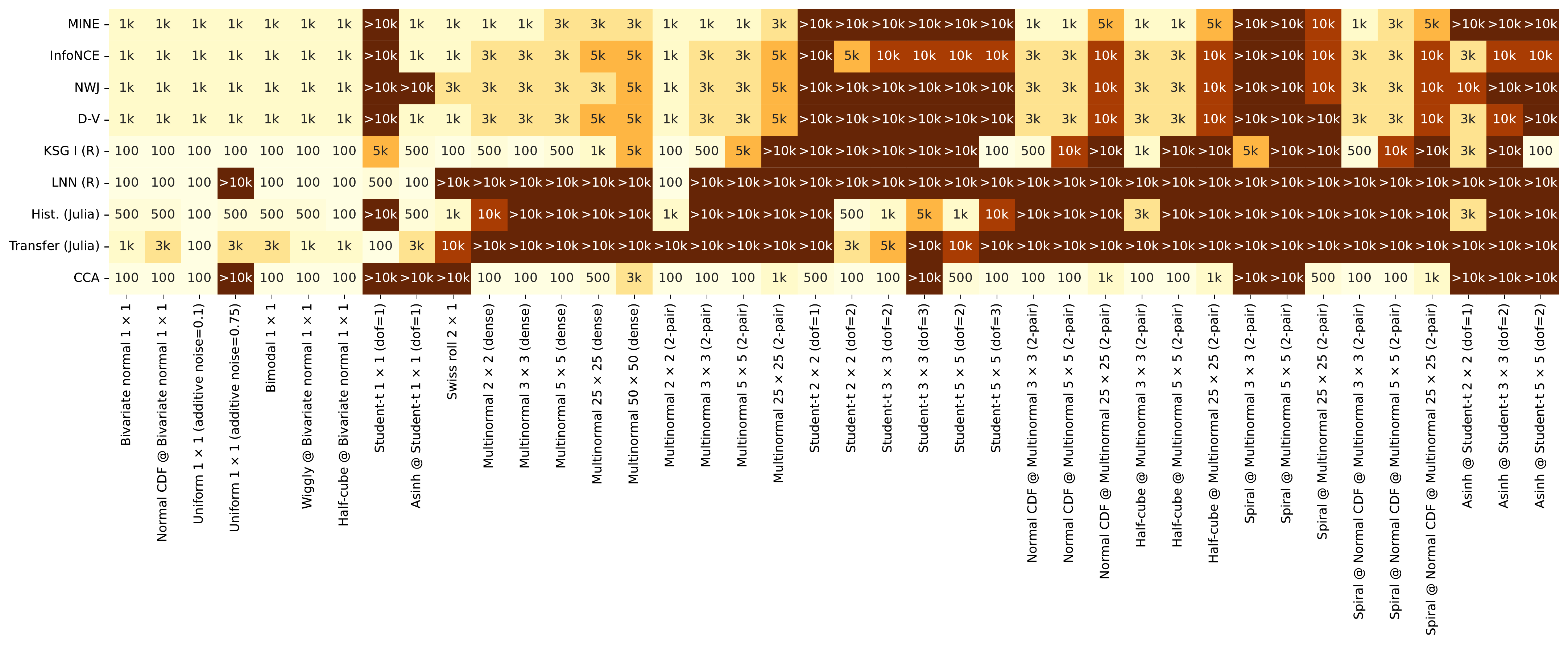}
    \vspace{-2em}
    \caption{Minimal number of samples needed for the mutual information estimate to fall between $2 / 3$ and $3 / 2$ of the true MI. The minimal number of samples tested is $100$, the maximal $10\,000$. The entry $>10k$ means that the desired accuracy was never achieved. For neural estimators $1\,000$ is the minimal possible number of samples, since we use a 50\% train-test split and a batch size of $256$.}
    \label{fig:benchmark-heatmap-n-samples}
\end{figure*}

We investigated the minimal number of samples needed to estimate the mutual information changing the number of samples in $N\in \{100, 500, 1000, 3000, 5000, 10\,000\}$.
We say that sample $\ell$ suffices to solve the task if for every $\ell' \ge \ell$ the estimator provides the estimate between 2/3 and 3/2 of the ground-truth mutual information.

We present the minimal number of samples $\ell$ as a function of benchmark task and considered estimator in Fig.~\ref{fig:benchmark-heatmap-n-samples}.
Note that CCA and KSG typically need very small amount of samples to solve the task.
As neural estimators use batch size of 256 and split the sample into training and test data sets, it is not possible for them to have $\ell < 1000$, which is the smallest number of samples considered greater than $2\cdot 256=512$.

\subsection{Preprocessing}

In Fig.~\ref{fig:benchmark-heatmap} and Fig.~\ref{fig:benchmark-heatmap-n-samples} we transformed all samples by removing the mean and scaling by the standard deviation in each dimension calculated empirically from the sample.
This strategy ensures that the scales on different axes are similar.
Note that this strategy is not always suitable (e.g., if the distance metric used in the $k$NN approach should weight select dimensions differently) and makes the observed data points not i.i.d. (as the ``shared'' information, sample mean and variances, has been used to rescale the variables), although the effect of it seems to be negligible on most tasks in the benchmark.

Additionally, we considered two other preprocessing strategies, implemented using the \texttt{scikit-learn} package \citep{scikit-learn}: making each marginal distribution uniform $\mathrm{Uniform}(0, 1)$ by calculating empirical CDF along each axis (see Section~\ref{appendix:additional-information-on-benchmark-tasks} for the technical lemmata discussing this operation) and making each marginal distribution a Gaussian variable $\mathcal N(0, 1)$.
These approaches may affect the effective sample size more strongly than the previous rescaling strategy, as estimating empirical CDF requires more data points than estimating mean and standard deviation.

We used $N=10\, 000$ data points and rerun the benchmark (see Fig.~\ref{fig:benchmark-preprocessing-uniform-gauss}).
Overall, the impact of the ``uniformization'' strategy appeared to be insubstantial for most tasks, and in certain cases negative.
The ``Gaussianisation'' strategy improved estimation for Student distributions and helped histogram based approaches in low-dimensional tasks.
Interestingly, both approaches worsen the performance of CCA on Student distributions.

While the improvements gained from ``Gaussianisation'' were minor, the approach is simplistic, treating each axis separately.
Ideally, appropriate preprocessing would be able to transform $P_{XY}$ into a better-behaved distribution $P_{f(X)g(Y)}$, and thus simplify the estimation. In certain cases such preprocessing could (partially) undo transformations considered in our benchmark, and in effect make the estimators invariant (or more robust) to these transformations (cf. Section~\ref{sec:selected-challenges}).
We consider the design of informed preprocessing strategies an important open problem.

\begin{figure*}[t]
    \centering
    \includegraphics[width=\textwidth]{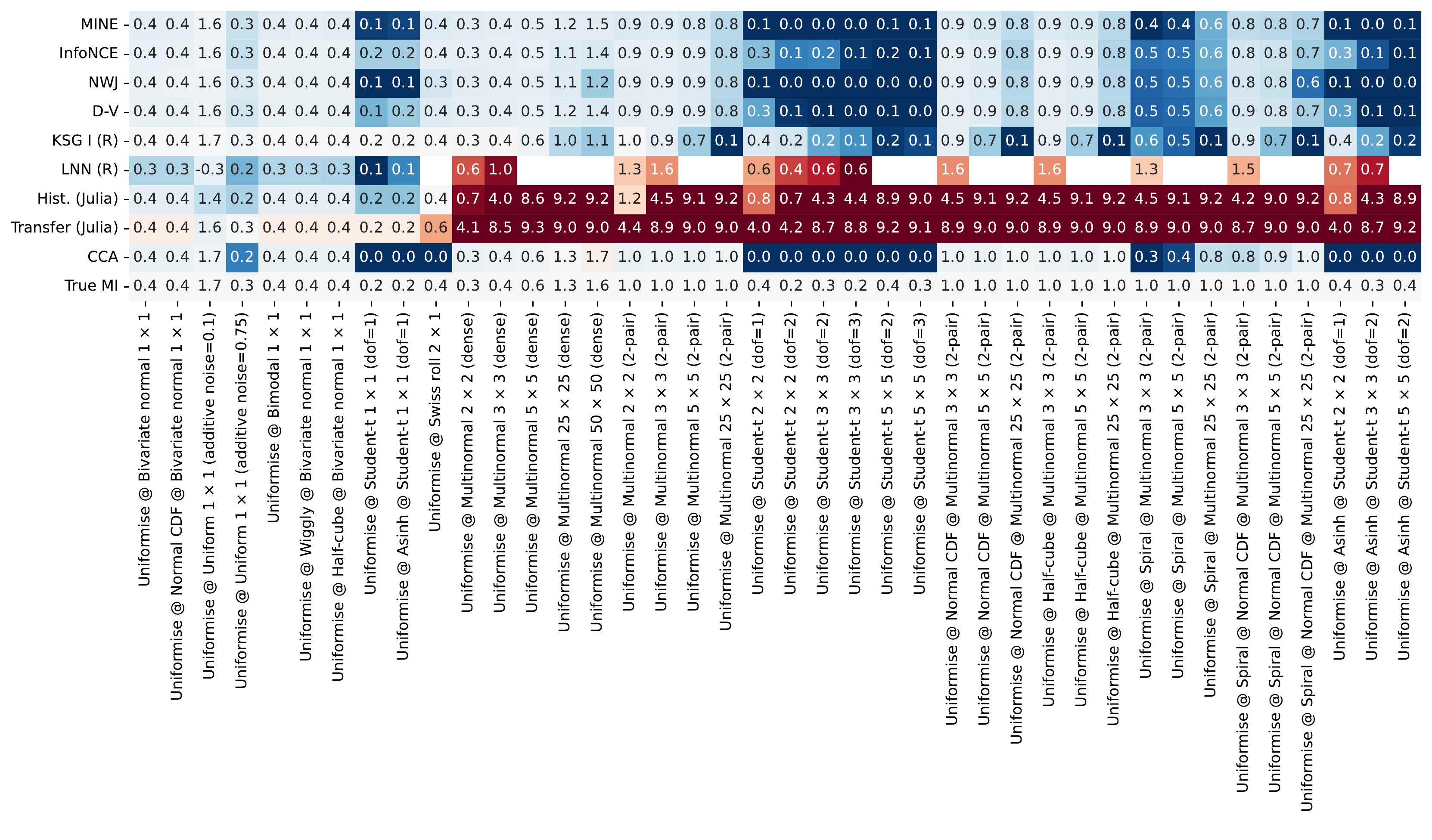}
    \includegraphics[width=\textwidth]{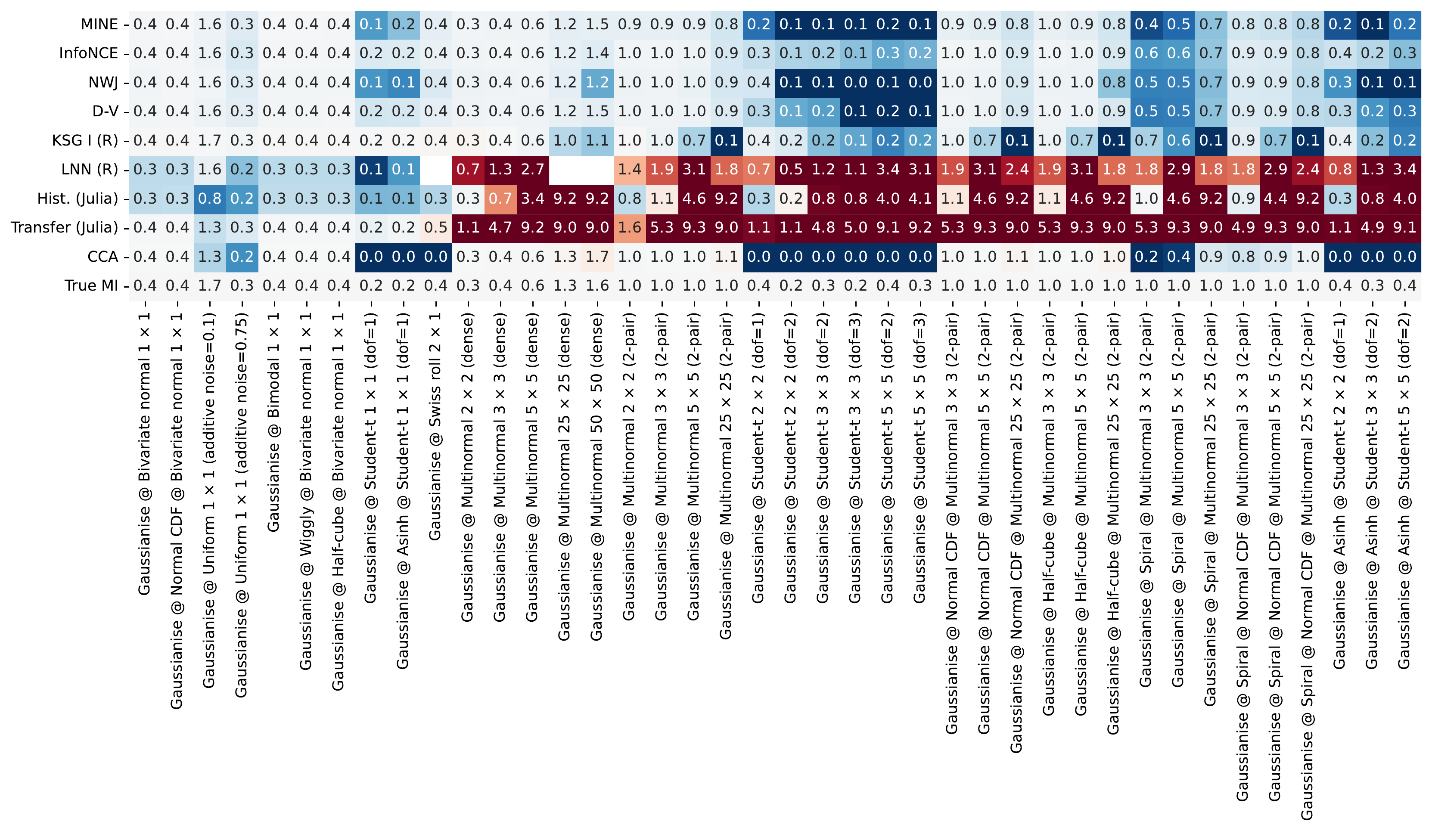}
    \vspace{-2em}
    \caption{Estimates for preprocessed data with $N=10\,000$ samples. Top: margins transformed into uniform distributions.
    Bottom: margins transformed into normal distributions.
    }
    \label{fig:benchmark-preprocessing-uniform-gauss}
\end{figure*}

\subsection{Hyperparameter Selection}

\begin{figure}[t]
    \begin{minipage}[t]{0.5\linewidth}
    \centering
    \includegraphics[width=\textwidth]{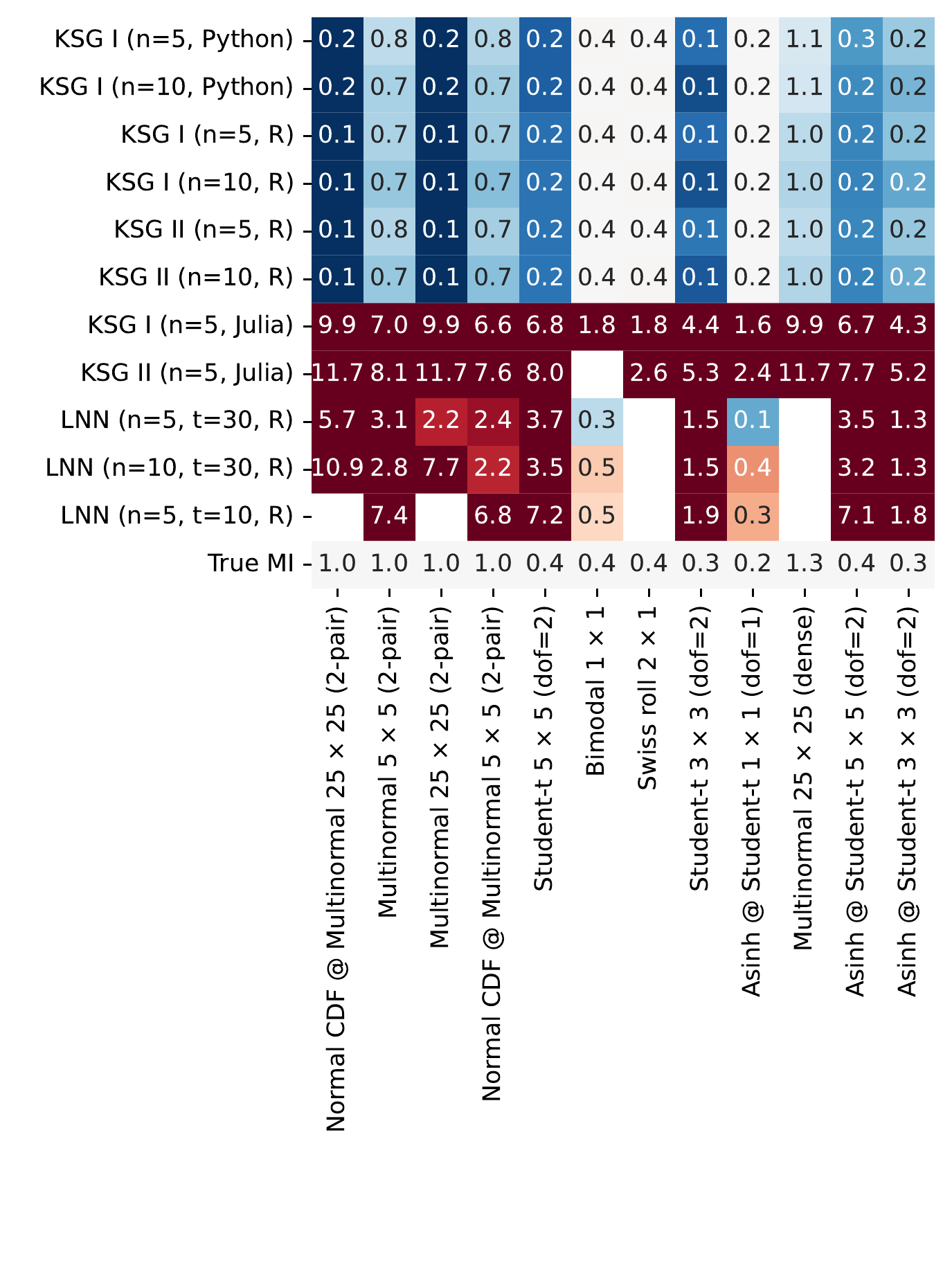}
    \end{minipage}%
    \begin{minipage}[t]{0.5\linewidth}
    \centering
    \includegraphics[width=\textwidth]{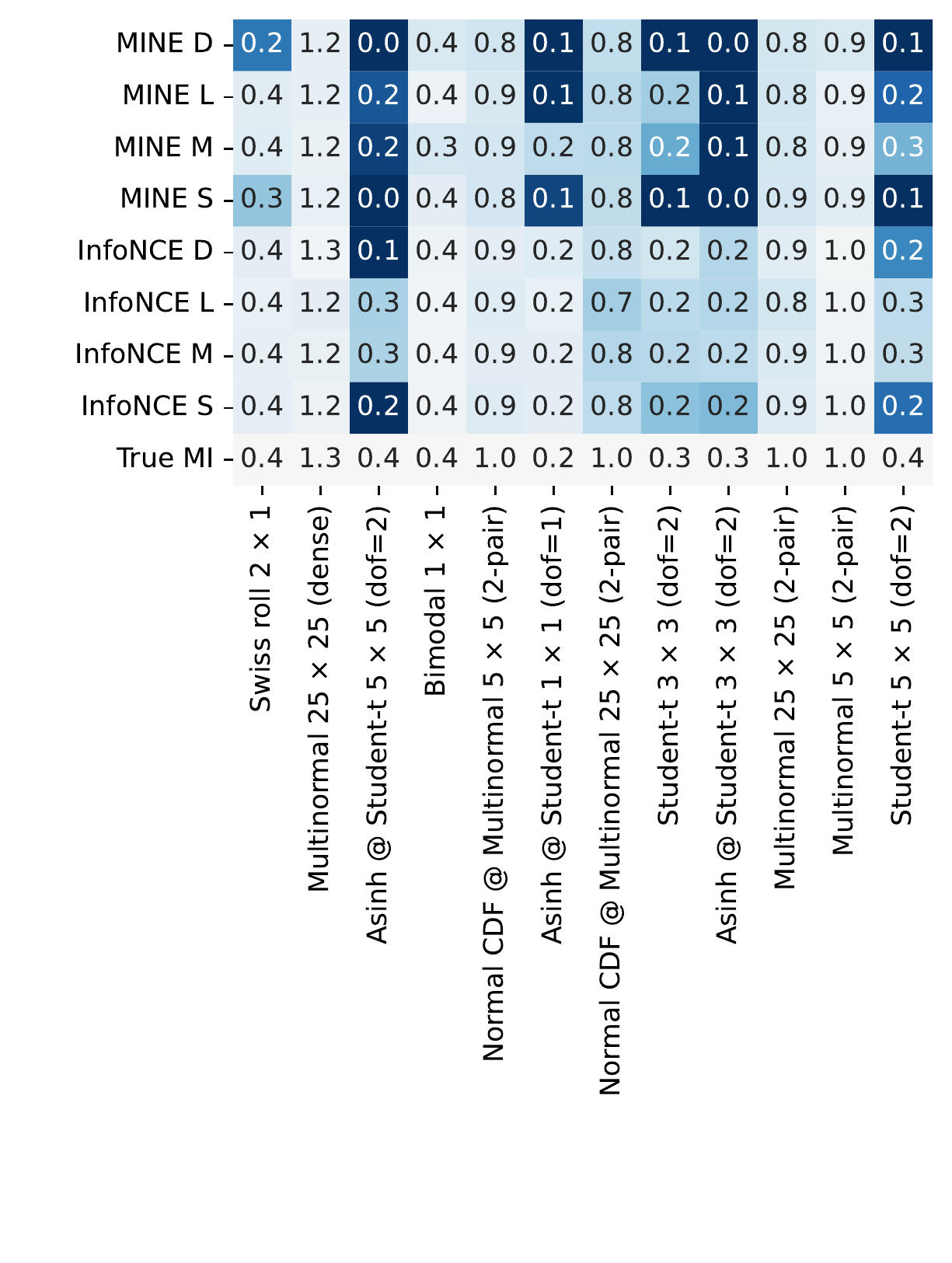}
    \end{minipage}%
    % \begin{minipage}[t]{0.3\linewidth}
    % \centering
    % \includegraphics[height=5cm]{figures/appendix/neural-architecture-fails.pdf}
    % \end{minipage}%
    \caption{Left: ablation of hyperparameters of the KSG and LNN estimators (with $n$ neighbors and threshold $t$). %
    Right: ablation of neural network architectures.}
    \label{fig:ablations-ksg-neural}
\end{figure}

\textbf{CCA} \; We used the implementation provided in the \texttt{scikit-learn} \citep{scikit-learn} Python package (version \texttt{1.2.2}). We set the latent space dimension to the smaller of the dimensions of the considered random vectors $X$ and $Y$. 

\textbf{Histograms} \; Adaptive estimators such as JIC~\citep{suzuki:16:jic} or MIIC~\citep{cabeli:20:mixedsc} were not applicable to the high-dimensional data we considered, hence we resort to using an equal-width estimator implemented in \texttt{TransferEntropy} library (version \texttt{1.10.0}). 
We fixed the number of bins to $10$, although the optimal number of bins is likely to depend on the dimensionality of the problem.

\textbf{KSG} \; As a default, we used the \texttt{KSG-1} variant from the \texttt{rmi} package with 10 neighbors. However, as shown in our ablation in Fig.~\ref{fig:ablations-ksg-neural}, changing the number of neighbors or using the \texttt{KSG-2} variant did not considerably change the estimates.
We compared the used implementation with a custom Python implementation and one from \texttt{TransferEntropy} library.
The version implemented in the \texttt{TransferEntropy} suffers from strong positive bias.

\textbf{LNN} \; We used the implementation from the \texttt{rmi} package (version \texttt{0.1.1}) with $5$ neighbors and truncation parameter set to $30$. Ablations can be seen in Fig.~\ref{fig:ablations-ksg-neural}.

\textbf{Neural Estimators} \; We split the data set into two equal-sized parts (training and test) and optimized the statistics network on the training data set using the Adam optimiser with initial learning rate set to $0.1$.
We used batch size of 256 and ran each algorithm for up to $10\,000$ training steps with early stopping (checked every 250 training steps).
We returned the highest estimate on the test data set.

We chose the architecture using a mini-benchmark with four ReLU neural networks considered on two neural estimators (see Fig.~\ref{fig:ablations-ksg-neural}) and chose a neural network with two hidden layers (network M in Table~\ref{table:net_arch_hyperparams}).

As neural estimators rely on stochastic gradient descent training, we implemented heuristic diagnostics used to diagnose potential problems with overfitting and non-convergence.
We present the number of runs susceptible to these issues in Fig.~\ref{fig:benchmark-neural-fails}.
We did not include these runs when computing statistics in the main benchmark figures (Fig.~\ref{fig:benchmark-heatmap} and Fig.~\ref{fig:benchmark-heatmap-n-samples}).

\begin{table}[t]
\centering
\begin{tabular}{cc}
\hline
\textbf{Architecture} & \textbf{Hidden layers} \\
\hline
D & 8, 8, 8 \\
L & 24, 12 \\
M & 16, 8 \\
S & 10, 5 \\
\hline
\end{tabular}
\caption{Network architectures and their corresponding hidden layer specifications.}
\label{table:net_arch_hyperparams}
\end{table}

\begin{figure*}[t]
    \centering
    \includegraphics[width=\textwidth]{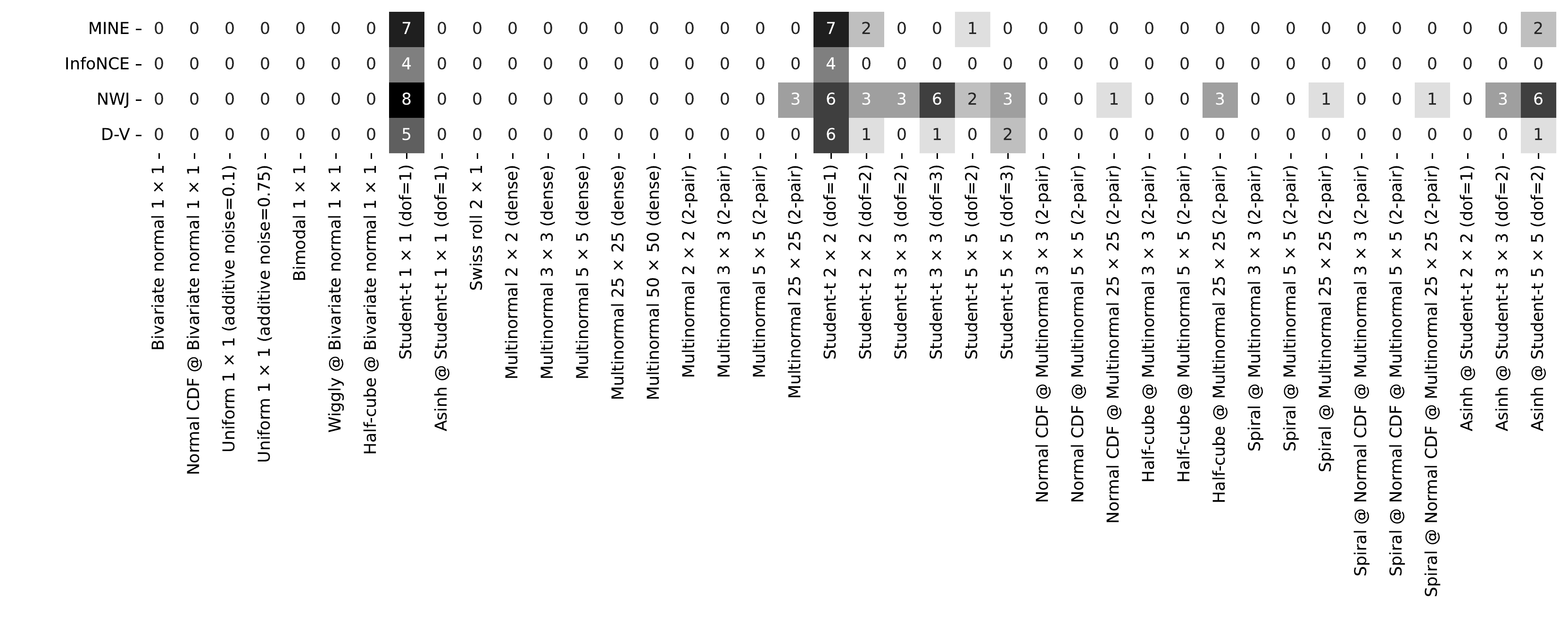}
    \vspace{-2em}
    \caption{ Number of runs (out of 10) when our convergence and overfitting diagnostics diagnosed potential problems in neural estimator training.}
    \label{fig:benchmark-neural-fails}
\end{figure*}

\section{Visualisation of Selected Tasks}
\label{appendix:visualizations}

In the following, we visualise a selection of benchmark tasks.
Each figure visualises a sample of $N=10\,000$ points from $P_{XY}$. The upper diagonal represents the scatter plots $P_{X_iX_j}$, $P_{X_iY_j}$ and $P_{Y_iY_j}$ for all possible combinations of $i\neq j$.
The lower diagonal represents the empirical density fitted using a kernel density estimator with 5 levels.
The diagonal represents the histograms of the marginal distributions $P_{X_i}$ and $P_{Y_i}$ for all possible $i$.

We present selected $1{\times}1$-dimensional tasks in Fig.~\ref{fig:appendix-visualisations-1dim-1dim-part1} and Fig.~\ref{fig:appendix-visualisations-1dim-1dim-part2}; the $2{\times}1$-dimensional Swiss-roll distribution in Fig.~\ref{fig:appendix-visualisations-swissroll} and selected $2{\times}2$-dimensional tasks in Fig.~\ref{fig:appendix-visualisations-2dim-2dim}.
Finally, in Fig.~\ref{fig:appendix-visualisations-3dim-3dim} we present two $3{\times}3$-dimensional problems, visualising the effect of the normal CDF and the spiral diffeomorphism.

\begin{figure}[t]
    \centering
    \begin{minipage}[t]{0.33\linewidth}
    \centering
    \includegraphics[width=\textwidth]{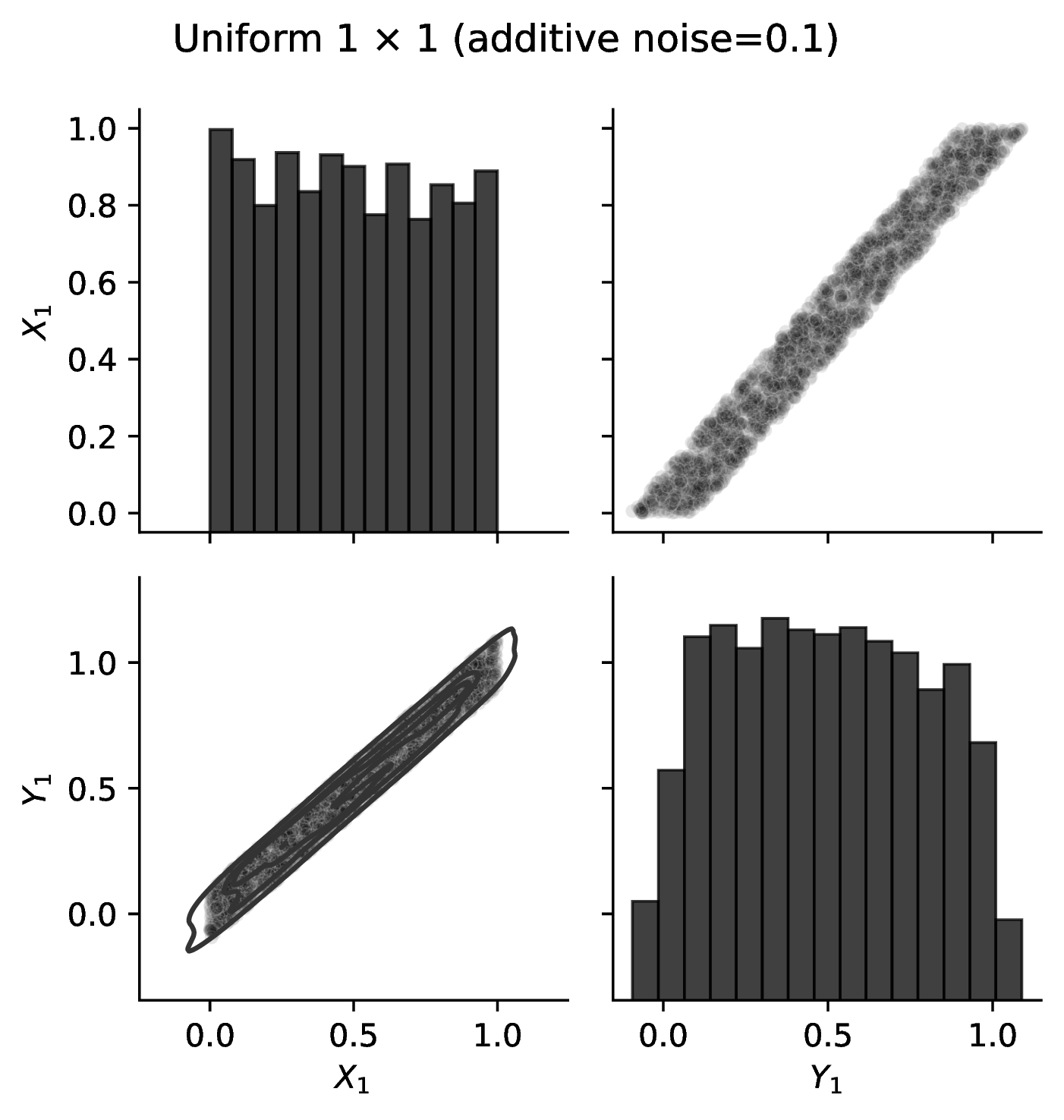}
    \end{minipage}%
    \begin{minipage}[t]{0.33\linewidth}
    \centering
    \includegraphics[width=\textwidth]{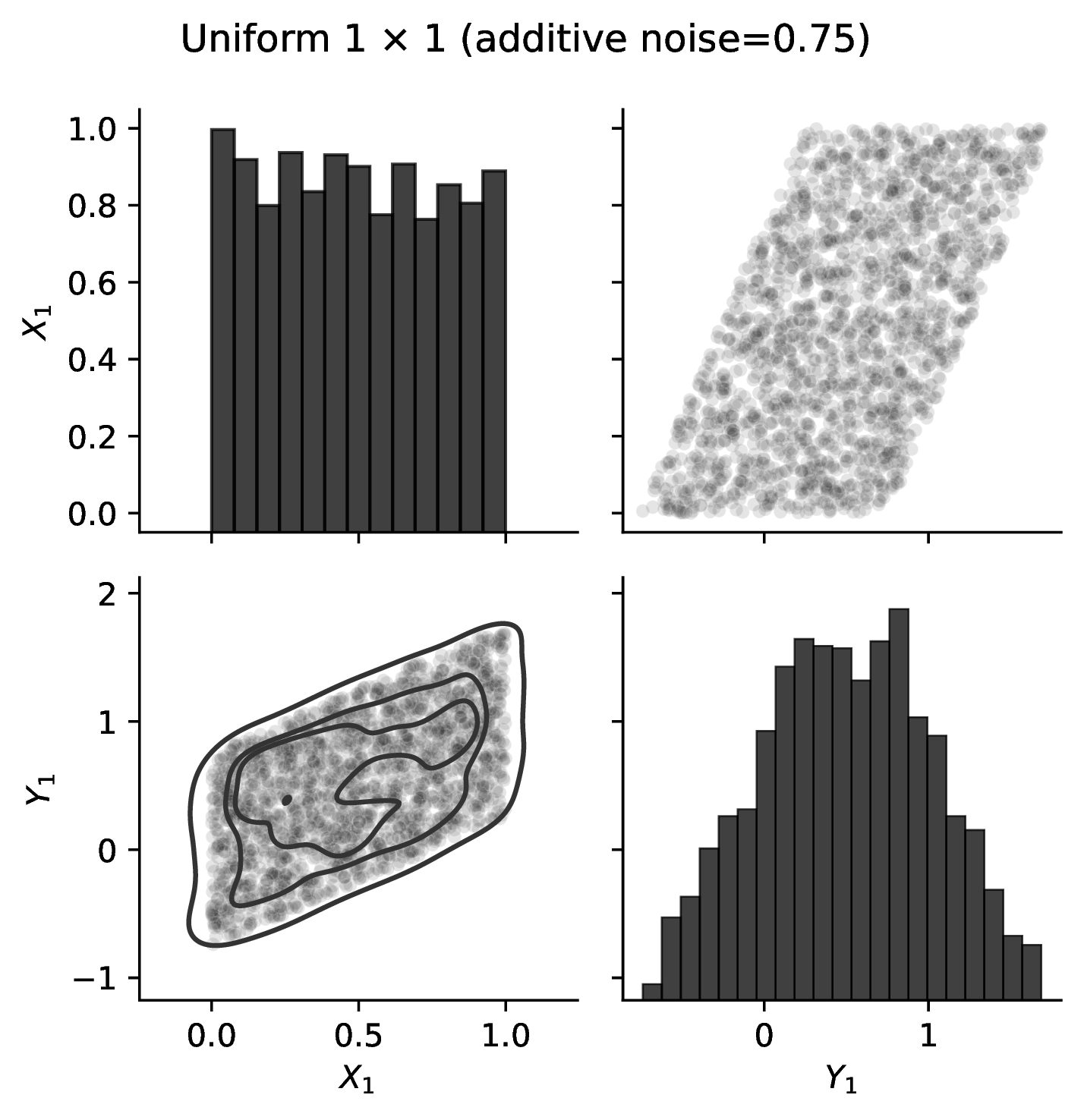}
    \end{minipage}%
    \linebreak
    \begin{minipage}[t]{0.33\linewidth}
    \centering
    \includegraphics[width=\textwidth]{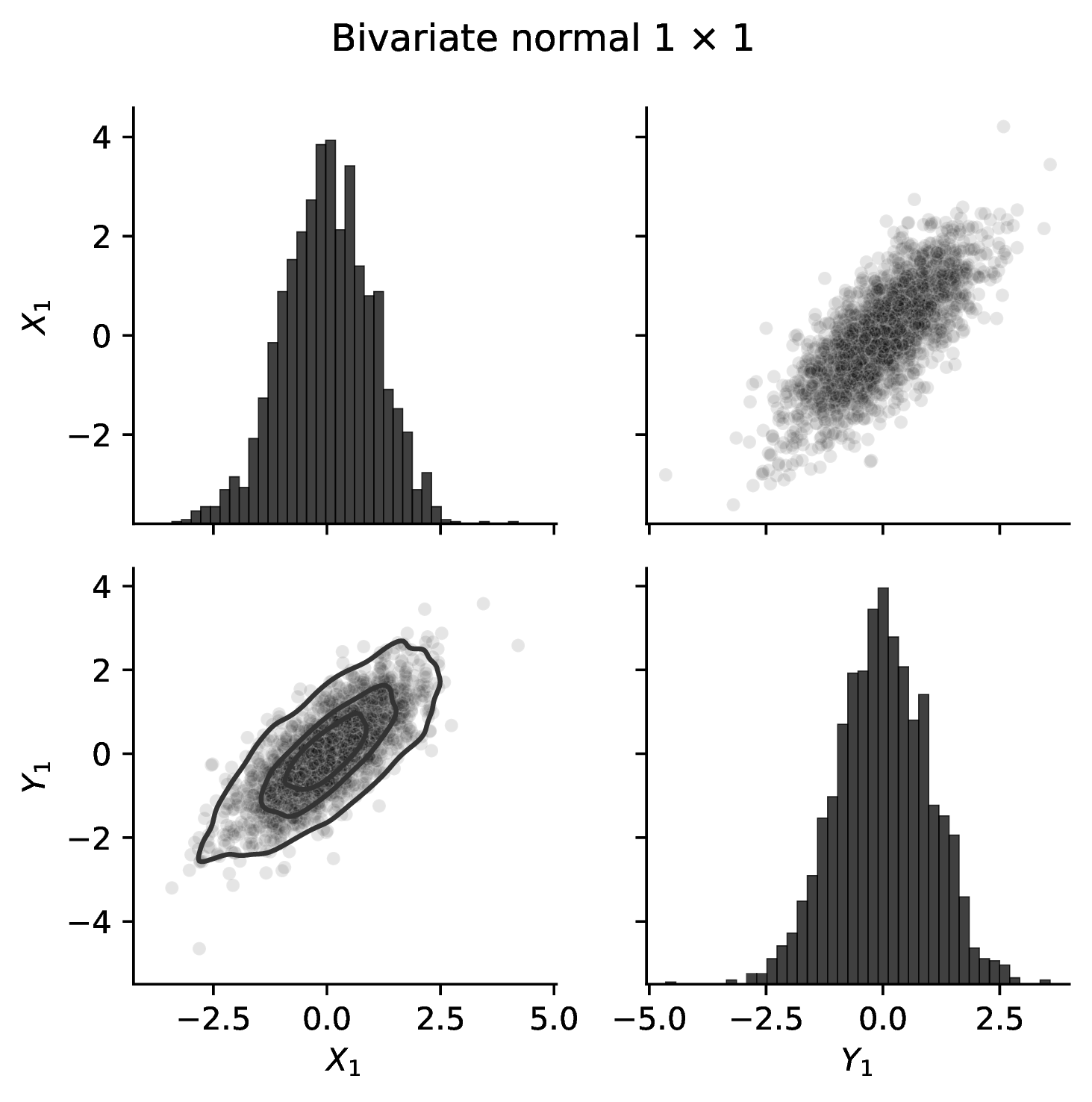}
    \end{minipage}%
    \begin{minipage}[t]{0.33\linewidth}
    \centering
    \includegraphics[width=\textwidth]{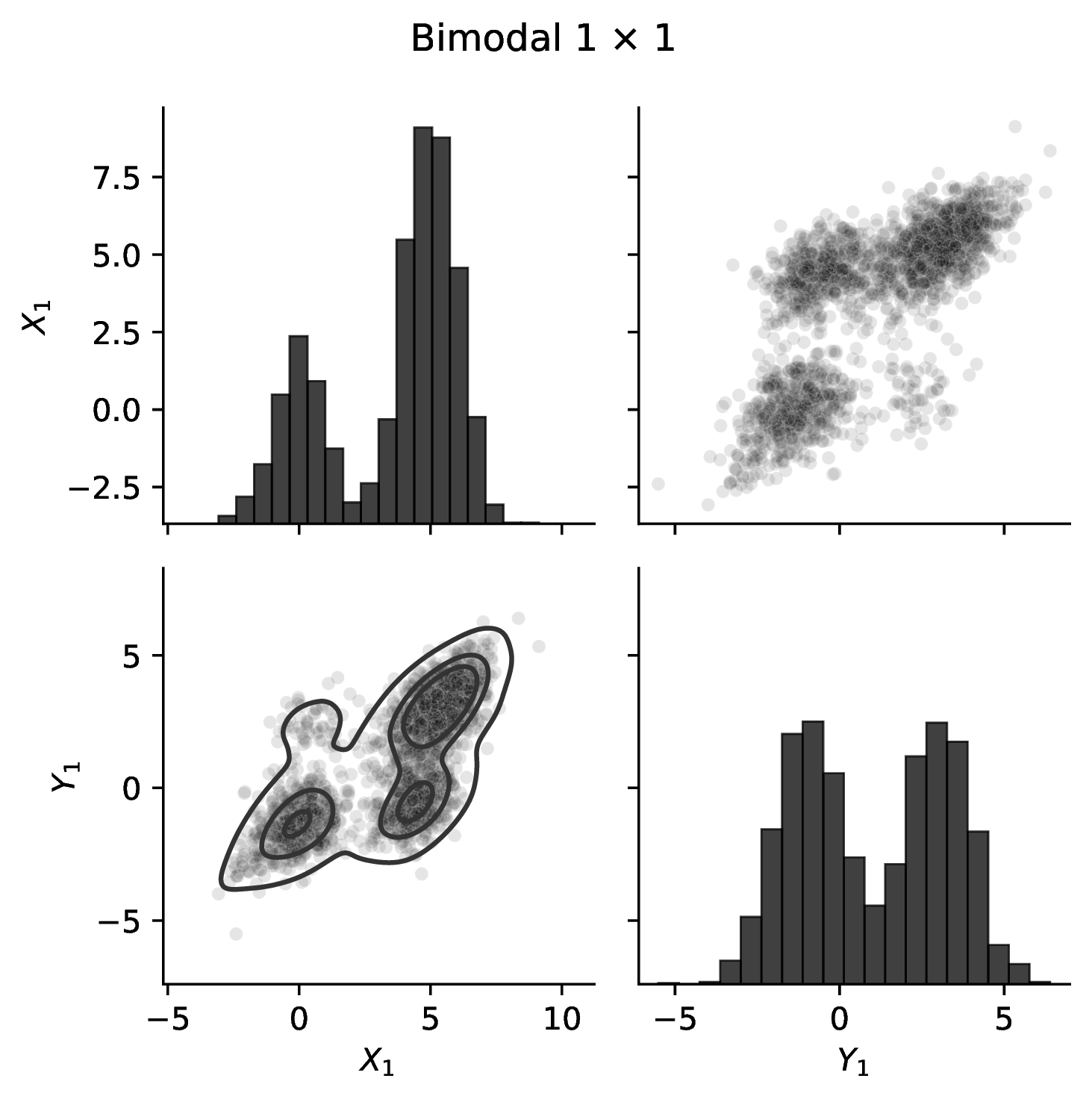}
    \end{minipage}%
    \caption{Top row: additive noise model. Bottom row: bivariate normal distribution and transformation making each variable a bimodal mixture of Gaussians.}
    \label{fig:appendix-visualisations-1dim-1dim-part1}
\end{figure}

\begin{figure}[t]
    \centering
    \begin{minipage}[t]{0.33\linewidth}
    \centering
    \includegraphics[width=\textwidth]{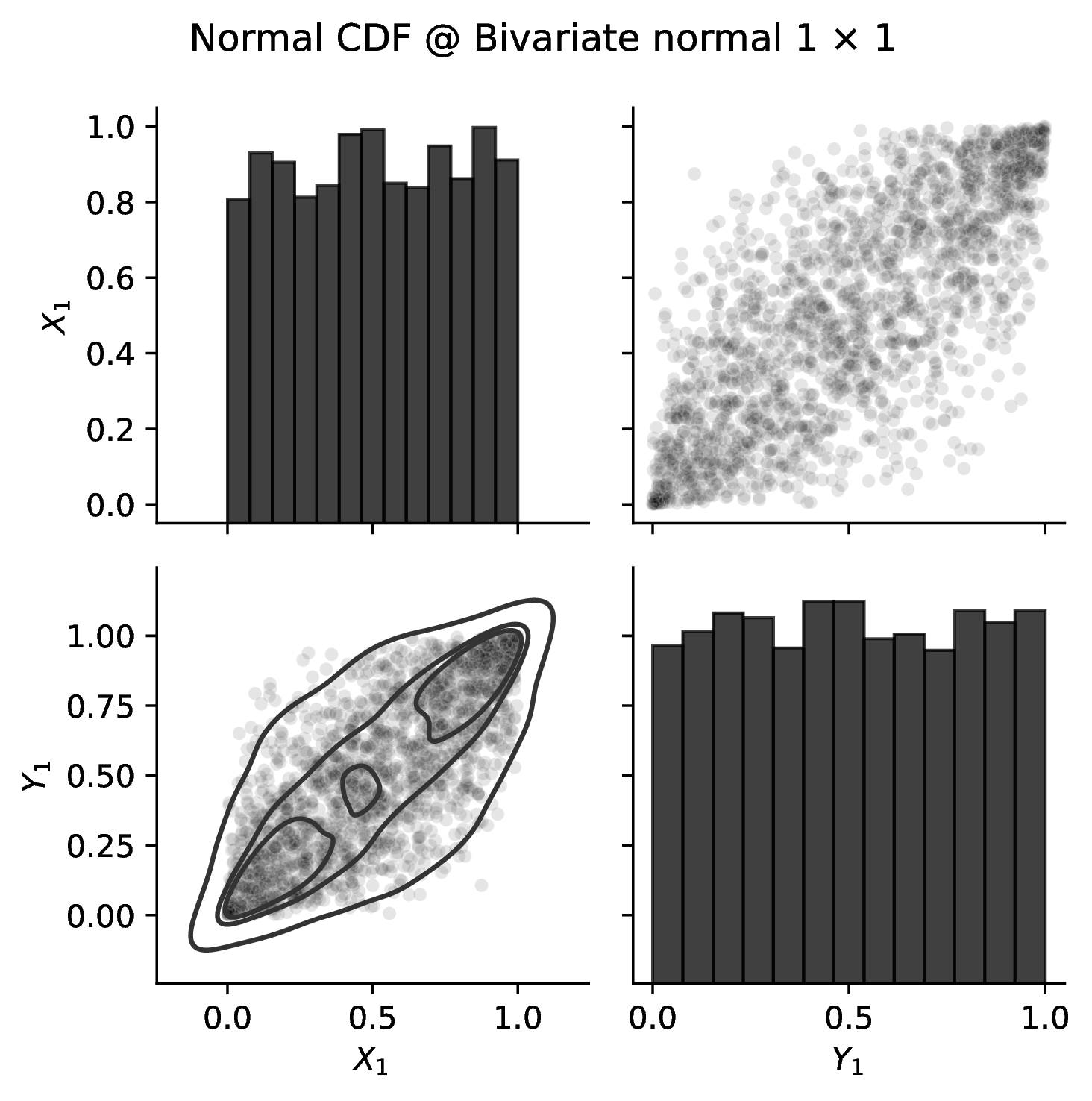}
    \end{minipage}%
    \begin{minipage}[t]{0.33\linewidth}
    \centering
    \includegraphics[width=\textwidth]{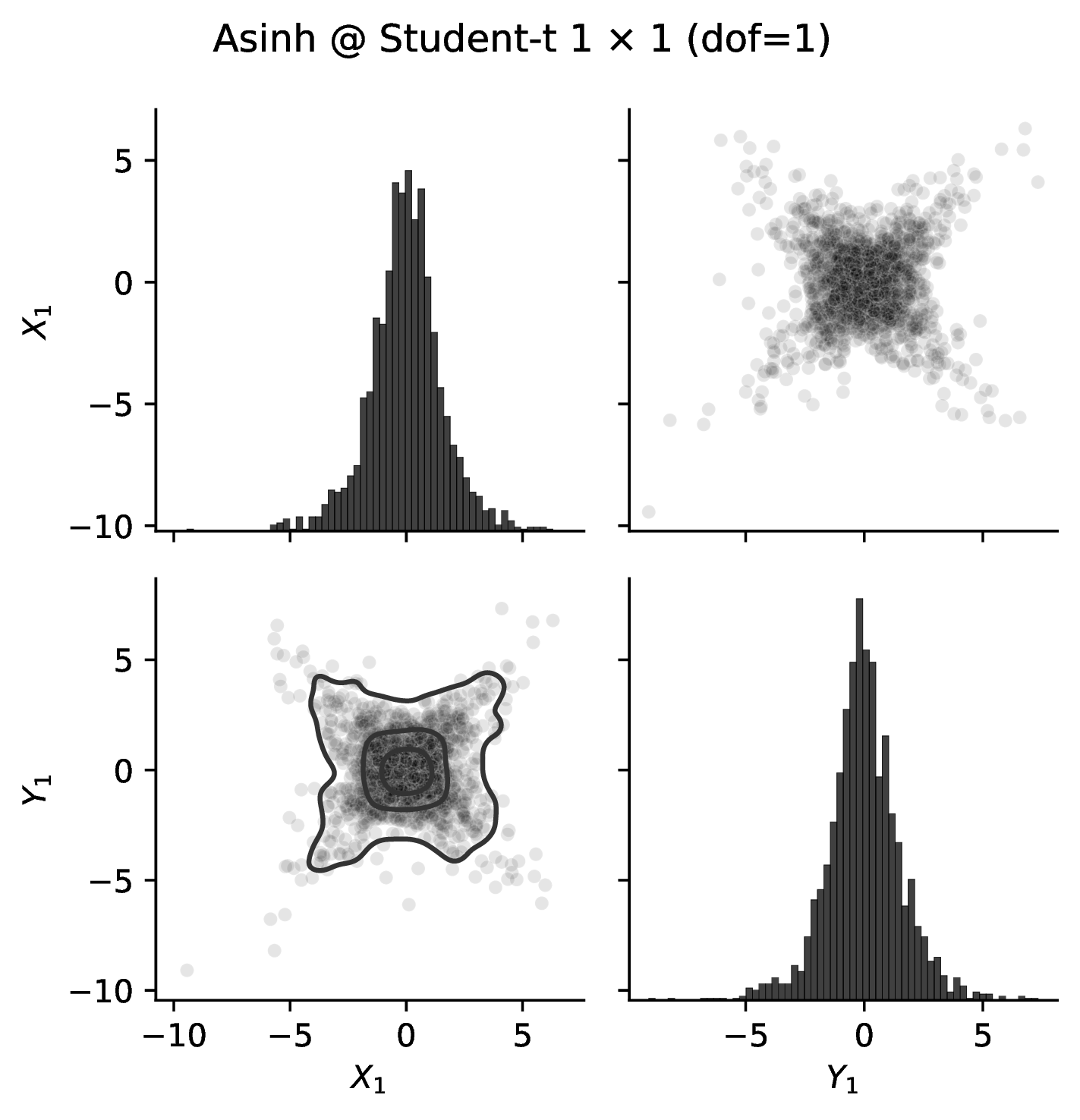}
    \end{minipage}%
    \linebreak
    \begin{minipage}[t]{0.33\linewidth}
    \centering
    \includegraphics[width=\textwidth]{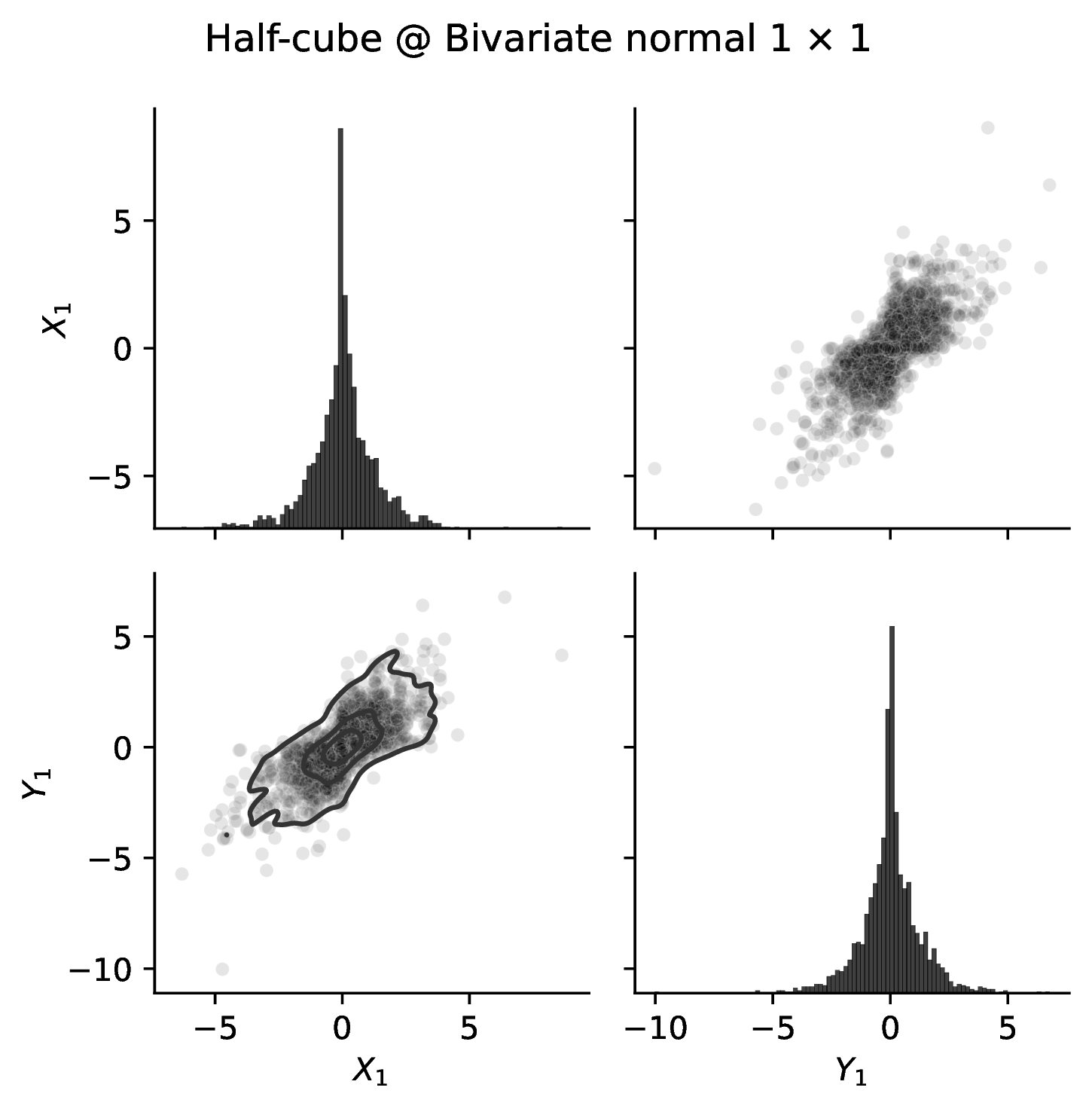}
    \end{minipage}%
    \begin{minipage}[t]{0.33\linewidth}
    \centering
    \includegraphics[width=\textwidth]{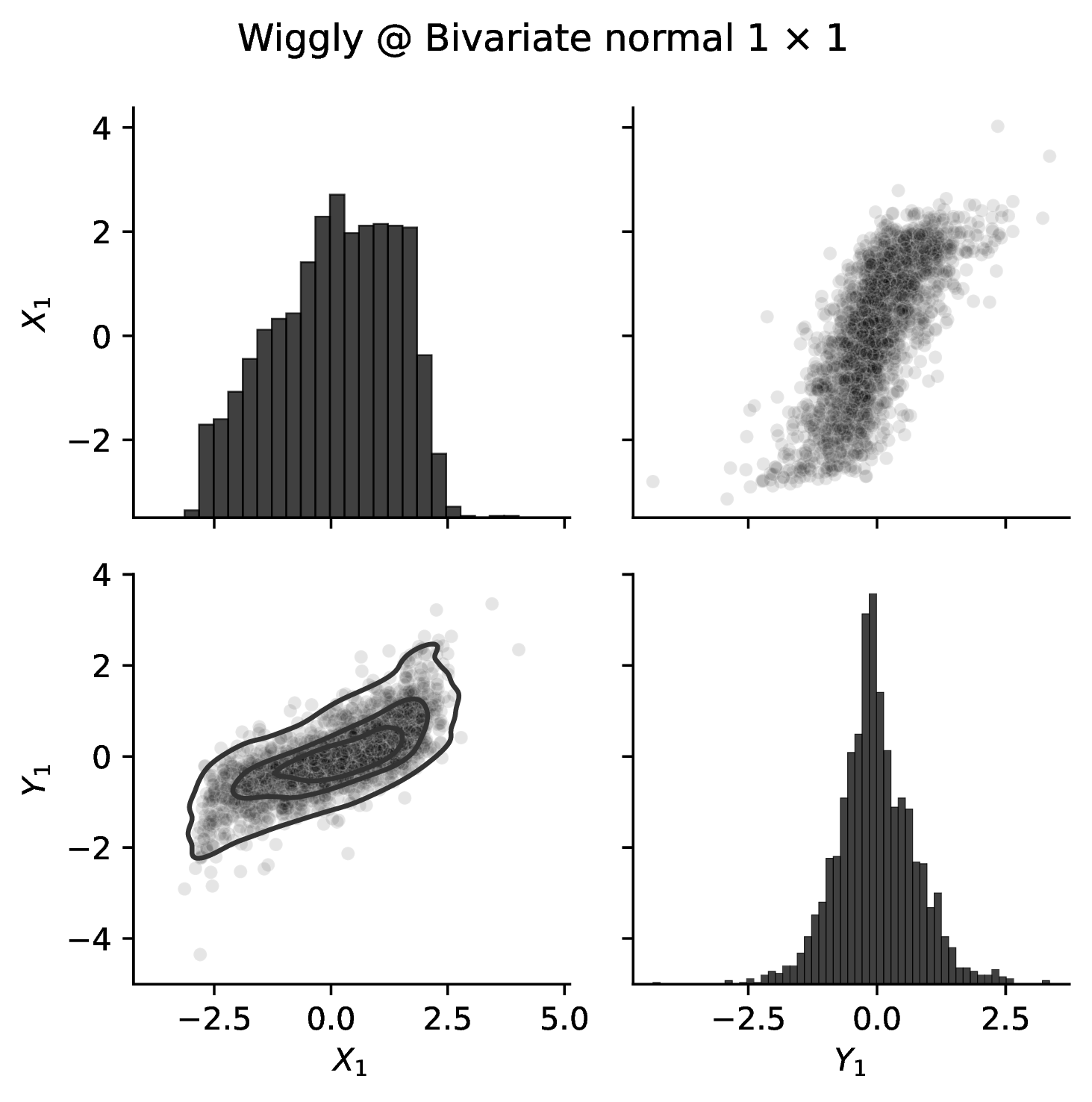}
    \end{minipage}%
    \caption{Top row: bivariate normal distribution transformed so that the margins are uniform and $\mathrm{asinh}$-transformed bivariate Student distribution with 1 degree of freedom. % 
    Bottom row: bivariate normal distribution transformed with the half-cube and the wiggly mappings.}
    \label{fig:appendix-visualisations-1dim-1dim-part2}
\end{figure}

\begin{figure}[t]
    \centering
    \includegraphics[width=0.7\textwidth]{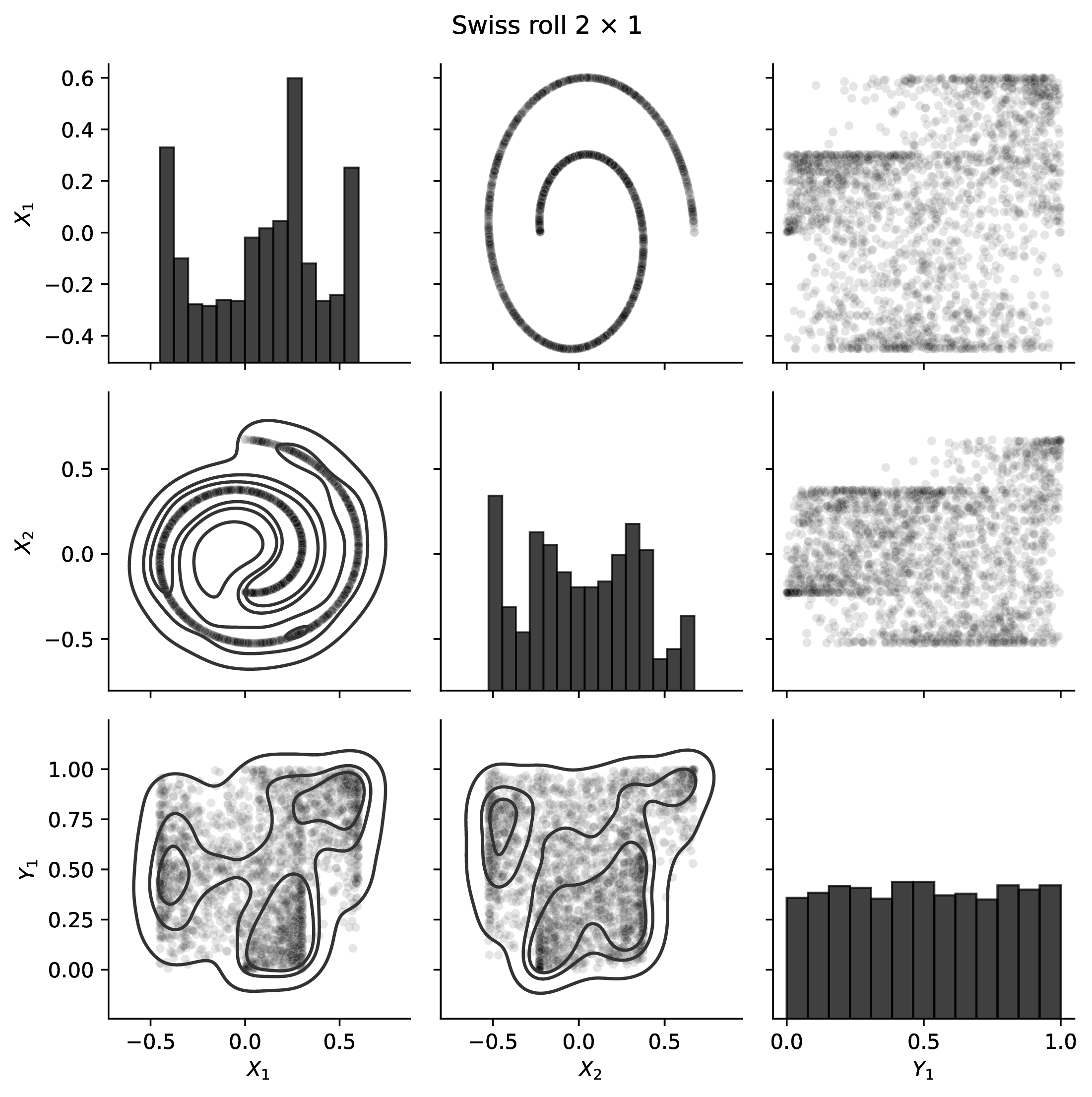}
    \caption{
        Visualisation of the Swiss-roll distribution.
    }
    \label{fig:appendix-visualisations-swissroll}
\end{figure}

\begin{figure}[t]
    \centering
    \begin{minipage}[t]{0.5\linewidth}
    \centering
    \includegraphics[width=\textwidth]{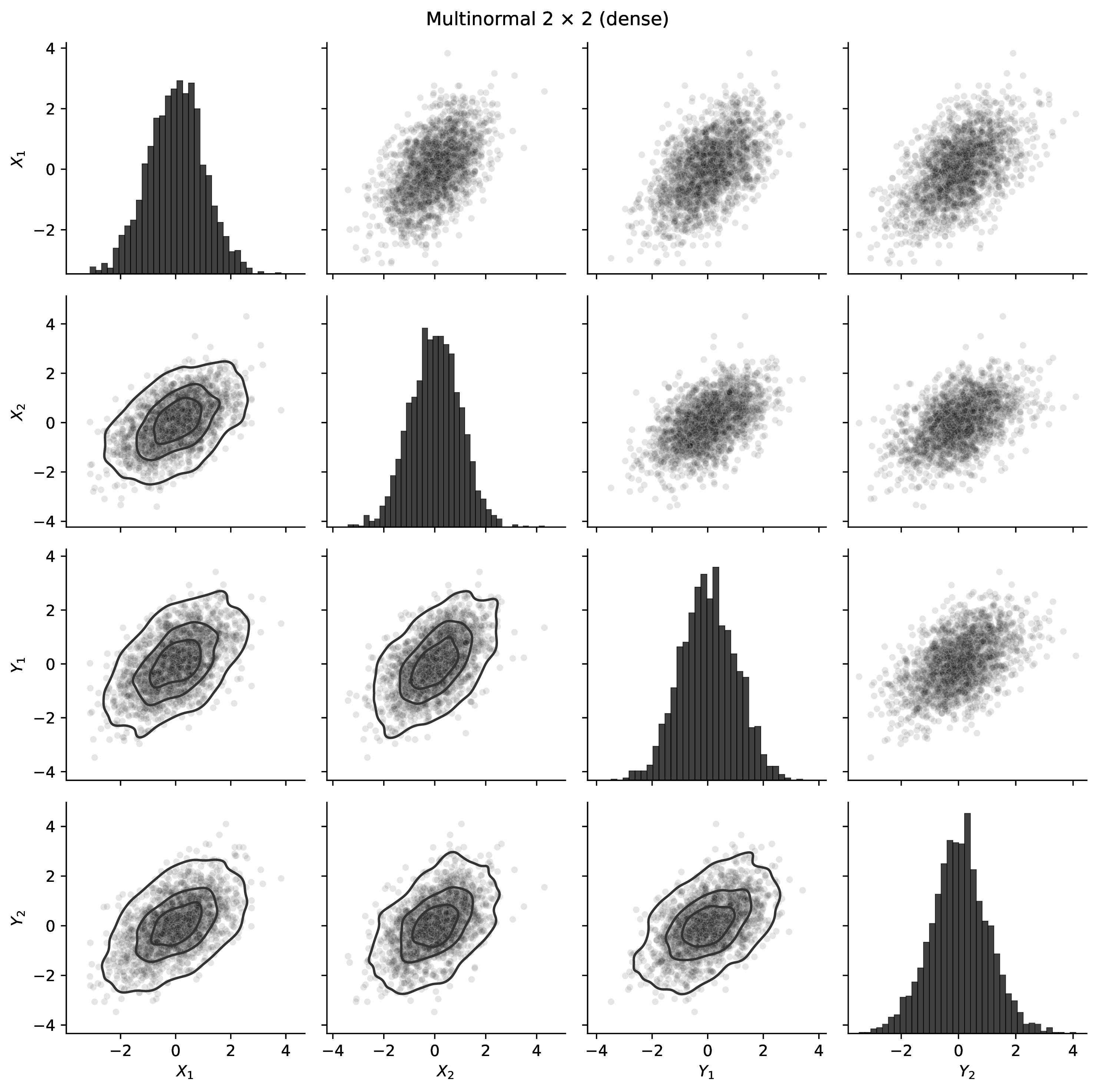}
    \end{minipage}%
    \begin{minipage}[t]{0.5\linewidth}
    \centering
    \includegraphics[width=\textwidth]{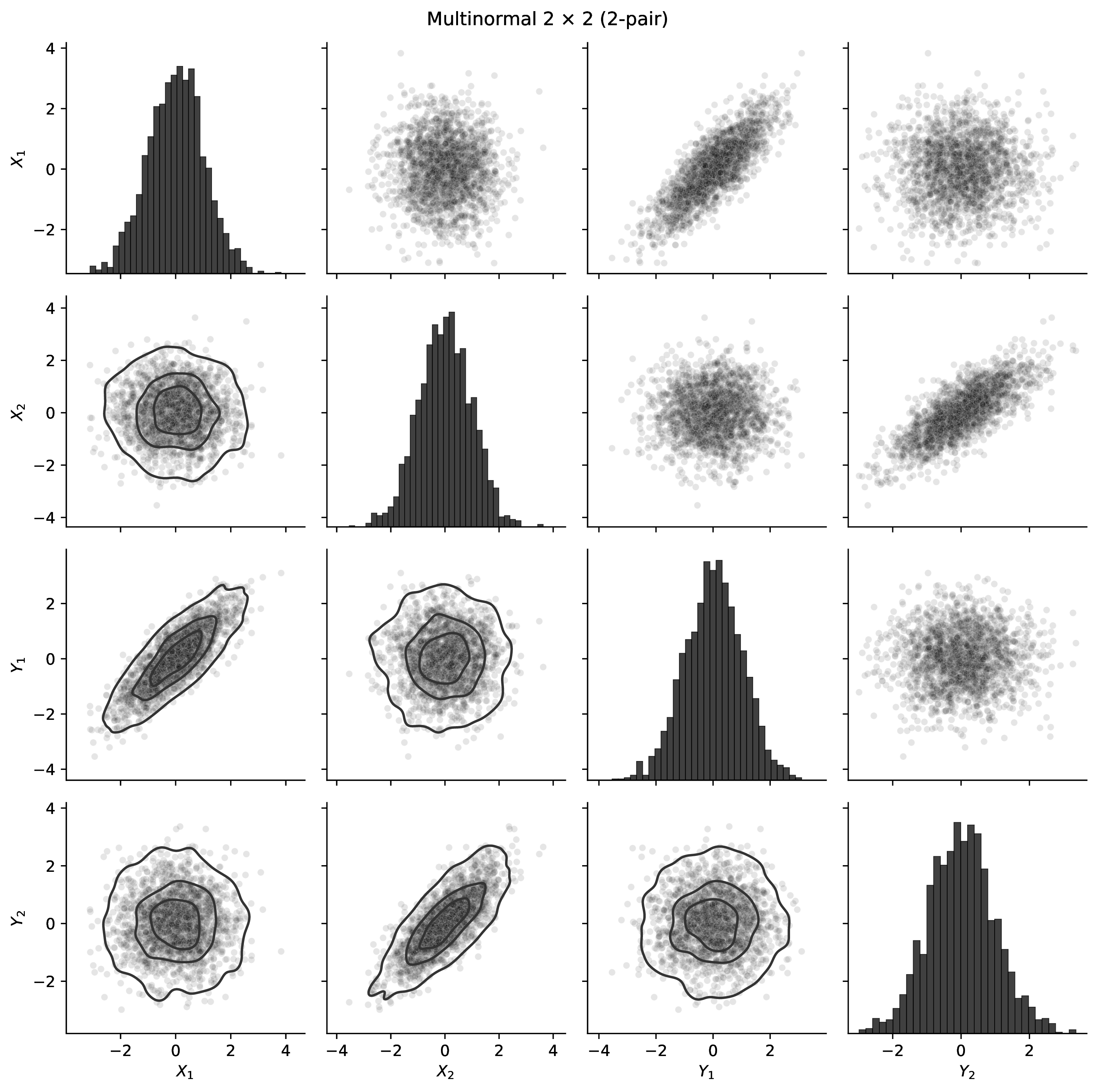}
    \end{minipage}%
    
    \begin{minipage}[t]{0.5\linewidth}
    \centering
    \includegraphics[width=\textwidth]{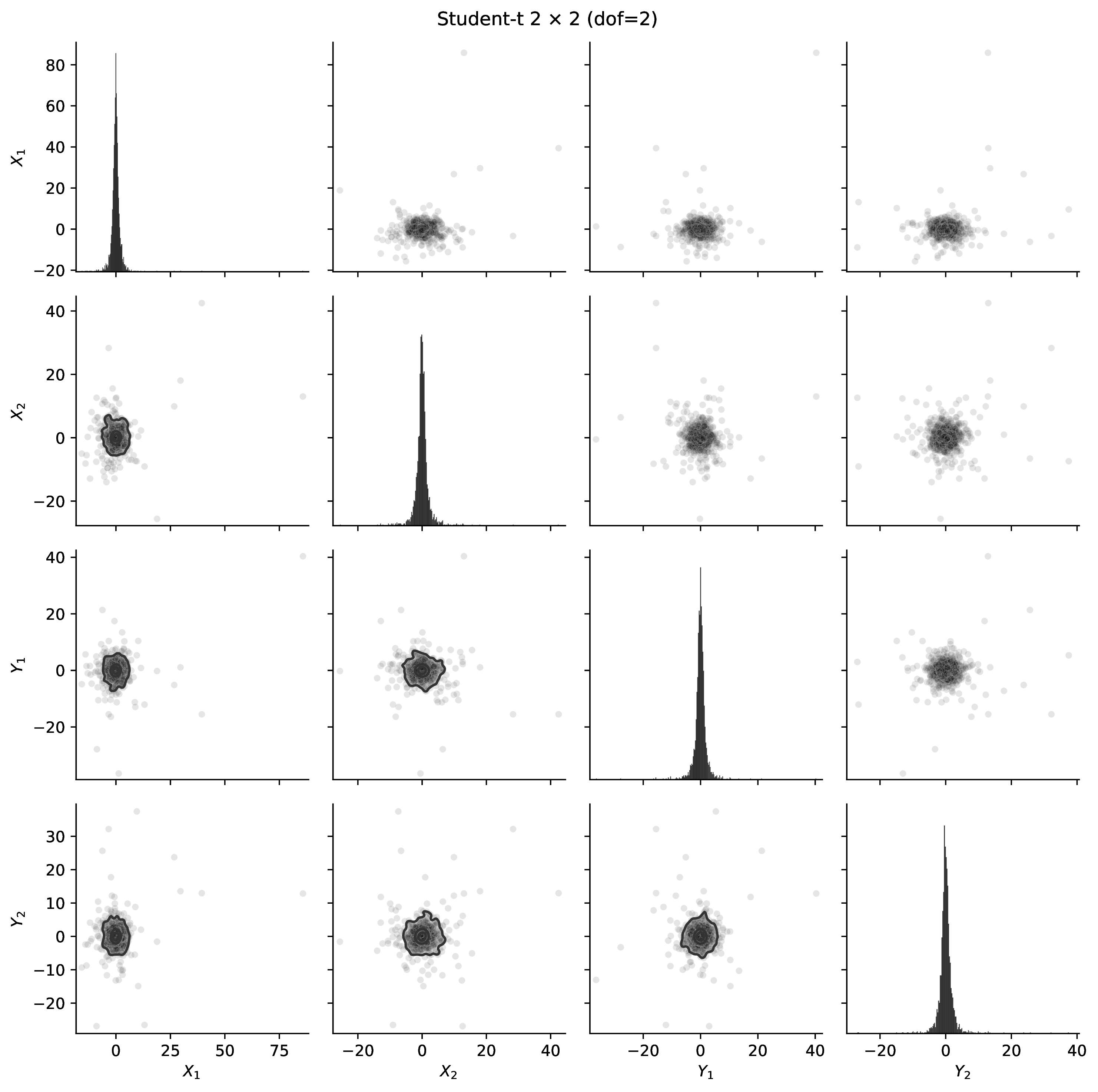}
    \end{minipage}%
    \caption{
        Visualisations of selected $2{\times}2$-dimensional problems. Top row: multivariate normal distributions with dense and sparse interactions. Bottom row: multivariate Student distribution with 2 degrees of freedom.
    }
    \label{fig:appendix-visualisations-2dim-2dim}
\end{figure}

\begin{figure}[t]
    \centering
    \begin{minipage}[t]{0.75\linewidth}
    \centering
    \includegraphics[width=\textwidth]{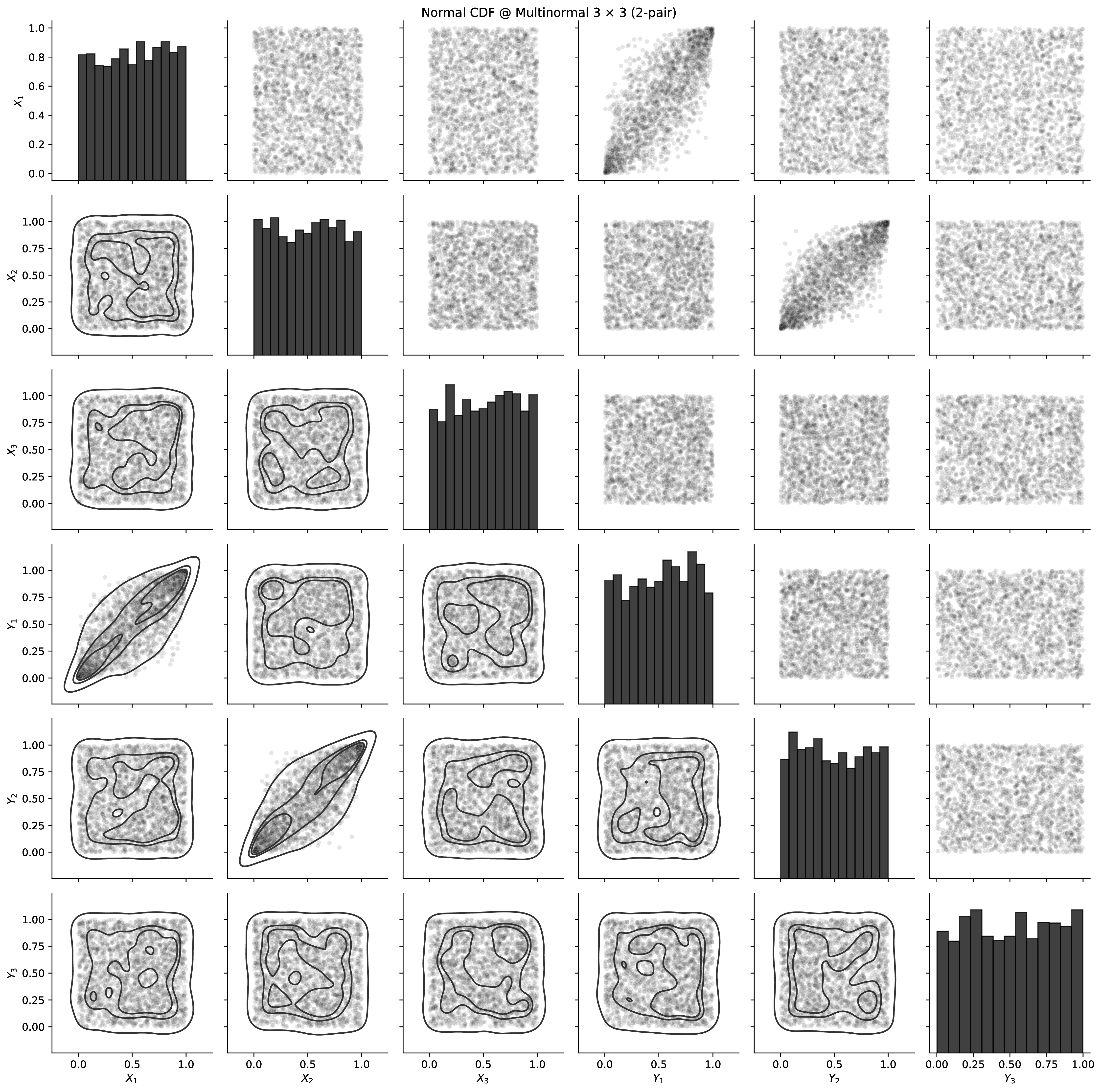}
    \end{minipage}%
    \linebreak
    \begin{minipage}[t]{0.75\linewidth}
    \centering
    \includegraphics[width=\textwidth]{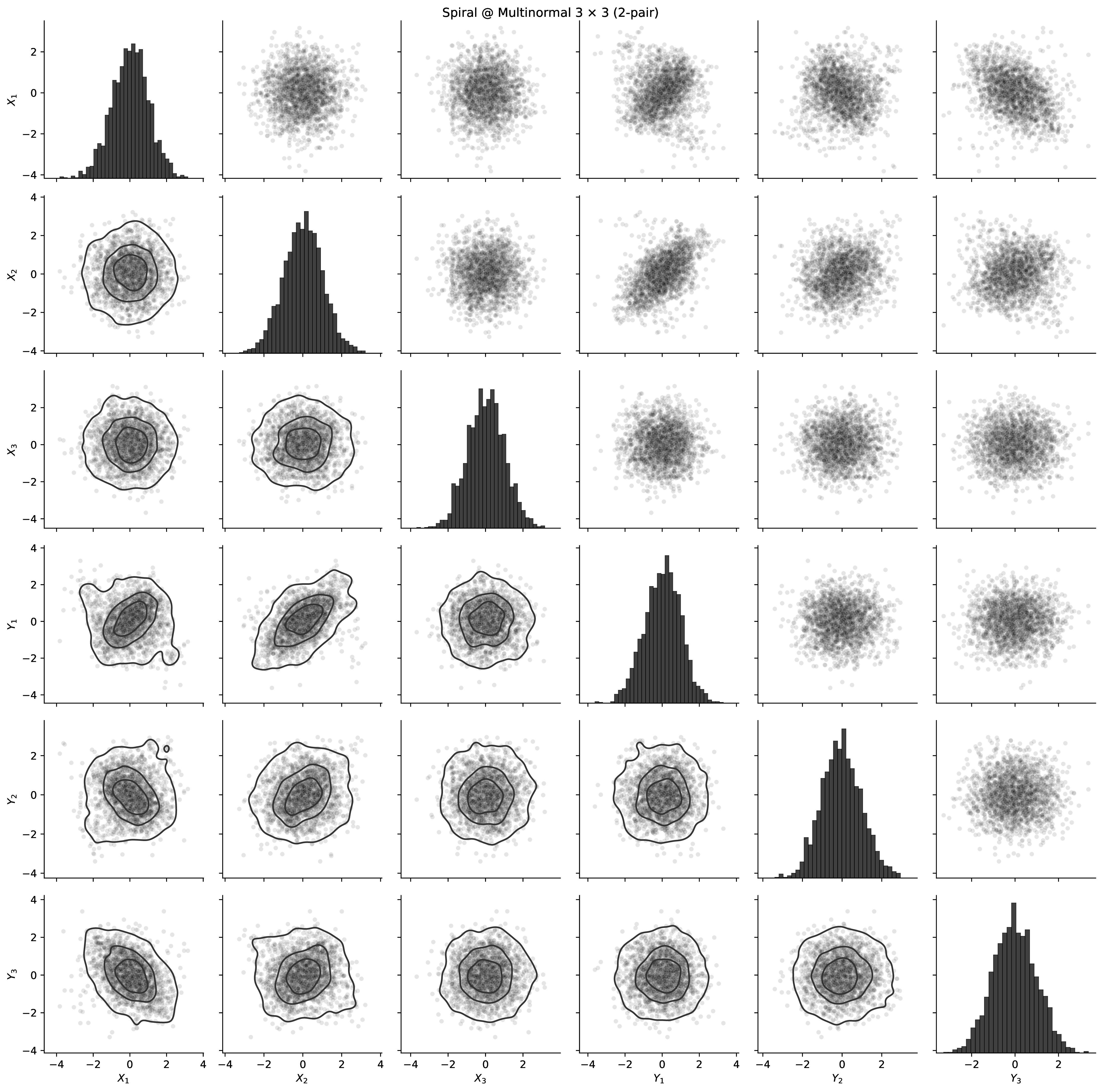}
    \end{minipage}%
    
    \caption{
        Visualisations of selected $3{\times}3$-dimensional problems. Top: multivariate normal distribution transformed axis-wise with normal CDF, to uniformise margins. Bottom: multivariate normal distribution transformed with the spiral diffeomorphism.
    }
    \label{fig:appendix-visualisations-3dim-3dim}
\end{figure}

\newpage

\section{Author Contributions}

Contributions according to the Contributor Roles Taxonomy:
\begin{enumerate}
    \item Conceptualization: PC, FG, AM, NB.
    \item Methodology: FG, PC.
    \item Software: PC, FG.
    \item Validation: FG, PC.
    \item Formal analysis: PC, FG.
    \item Investigation: FG, PC, AM.
    \item Resources: FG, NB.
    \item Data Curation: FG, PC.
    \item Visualization: FG, PC.
    \item Supervision: AM, NB.
    \item Writing – original draft: PC, AM, FG.
    \item Writing – review and editing: PC, AM, FG, NB, JEV.
\end{enumerate}

\end{document}